\documentclass{article}
\usepackage{arxiv}
\usepackage[T1]{fontenc}
\usepackage{float}
\usepackage{url}
\usepackage[hidelinks]{hyperref}
\usepackage[utf8]{inputenc}
\usepackage{graphicx}
\usepackage{amssymb}
\usepackage{booktabs}
\usepackage{algorithm}
\usepackage{algorithmic}
\usepackage{tikz}
\usepackage{amsmath}
\usepackage{amsthm}
\usepackage{cancel}
\usepackage{color}
\usepackage{comment}
\usepackage{mathtools, stmaryrd}
\usepackage[compatibility=false]{caption}
\usepackage{subcaption}
\usepackage{color}
\usepackage{lineno}

\newtheorem{Definition}{Definition}
\newtheorem{Proposition}{Proposition}
\newtheorem{Example}{Example}
\newtheorem{Prop}{Property}
\newtheorem{CE}{Counterexample}
\newtheorem{Ob}{Observation}
\newtheorem{Rem}{Remark}

\title{From numerical proportions to analogical proportions between probabilities}

\author{ 
Henri Prade \\
	IRIT \\ Cr Rose Dieng-Kuntz \\
    Toulouse, 31400, France\\
	\texttt{henri.prade@irit.fr} 
	\And
Gilles Richard \\
	IRIT \\ Cr Rose Dieng-Kuntz \\
    Toulouse, 31400, France\\
	\texttt{gilles.richard@irit.fr}
}

\date{}
\renewcommand{\shorttitle}{From numerical proportions to analogical proportions between probabilities}

\hypersetup{
pdftitle={From numerical proportions to analogical proportions between probabilities},
pdfsubject={q-bio.NC, q-bio.QM},
pdfauthor={Henri Prade, Gilles Richard},
pdfkeywords={analogical proportion, probability, numerical proportion},
}

\begin{document}
\maketitle
\begin{abstract}
Analogical proportions link four items $a, b, c, d$  by a relation stating that ``$a$ is to $b$ as $c$ is to $d$",  $a, b, c, d$ being the formal representation of real world entities, ranging from simple numerical values to more complex structures such as profiles. Accordingly, $a, b, c, d$ could be atomic values
like Boolean, nominal or numerical  values,
more generally vectors of such values, or even families of items represented by
logical formulas. In this paper, we consider another representation setting, which is the probabilistic one. 
Precisely, the article proposes a study of {analogical} proportions between probabilities, whether they are simply between probability values, or between distributions (which requires the preservation of their normalization). More particularly, we study the properties of definitions based on arithmetic proportion, or on a combination of the former with geometric proportion, while other options are also discussed. 
Previous works have shown that when four profiles $a, b, c, d$, represented as vectors, form analogical proportions componentwise, it is likely that their classes form an  analogical proportion also. This is the basis of an analogical proportion-based classification method that can produce   accurate predictions. Similarly, in this paper  each profile is associated with a distribution  describing the frequencies of the possible values of a discrete attribute of interest. We then {discuss} and experimentally investigate  if the distributions {associated to}
four profiles forming an analogical proportion {themselves} also form an analogical proportion.
\end{abstract}
\keywords{analogical proportion \and probability \and numerical proportion}
\section{Introduction}\label{intro}
Analogies and probabilities are rarely  considered together, although they can be both related to induction \cite{PraRicJAL2024}. However, already in the 1930s, Janina Hosiasson-Lindenbaum \cite{Sznajder}, a formal philosopher, specialist of probabilistic reasoning and logician, considered the following analogical reasoning: if two conjectures $f_1$ and $f_2$ are consequences of the same hypothesis $h$, the observation of $f_1$ (which therefore becomes a fact) induces $h$ and increases belief in conjecture $f_2$. 
She sought additional hypotheses allowing her a probabilistic justification for this increase in belief. 
Later, in the same spirit, Polya \cite{Polya1954} will consider that $f_1$ and $f_2$ are analogous if they are implied by a common hypothesis.

In this article we are interested in another approach where analogies are expressed in the form of analogical proportions.\footnote {Polya's view of analogy and the analogical proportion-based formalization are quite different. However, if $a$ and $b$ on the one hand, and $c$ and $d$ on the other hand are ``analogous'', one can argue that an analogical proportion ``$a$ is to $b$ as $c$ is to $d$'' must hold. This is shown in \cite{ijis/PradeR11} where ``$a$ analogous to $b$'' is modeled, quite differently from Polya \cite{Polya1954}, by the non-monotonic consequence relations ``if $a$ generally $b$'' and ``if $b$ generally $a$''.} 
That is, statements of the form
``$a$ is to $b$ as $c$ is to $d$'' that relate four items. 
These items can refer to a wide range of possibilities, such as pieces of information, words, phrases, pieces of text (as explored in \cite{AfaLimPraRicIJCAI2022} or more recently in the context of LLM \cite{Webb2023,MitchellAnalogyLLM2024}), or even images \cite{PraRicIJCAI2021}. 
This approach  departs from the ones that
align two situations whose components are put into correspondence on the basis of the similarity of the relations that link them in each situation, as in  the structure mapping theory \cite{Gentner1983,ai/FalkenhainerFG89,GenHolKok2001}.
It also differs from its variants such as
\cite{KLD1994}, including 
 the CWSG (``copy with substitution and generation'') inference algorithm  \cite{Holyoak}, as well as from a 
 framework such as HDTP \cite{HDTPGUST2006} based on second order logic.

Analogical proportions are supposed to obey a few postulates that mimic key properties of numerical proportions. Analogical proportions have a long history that dates back to Ancient Greek time. However, if we except the pioneering work of S. Klein \cite{Klein83},   it is in the last two or three decades that they have encountered a renewal of interest with attempts at  formalizing them \cite{PirYvo99,LepAge2001,DelhayM04,StrYvoTAL2006,MicPraECSQARU2009}. At the formal level, analogical proportions  
{ can both be defined for atomic values like Boolean, nominal or numerical, or for vectors thereof via componentwise extension.} Recent works have  witnessed a renewed interest  for defining analogical proportions between numbers (in different ways) \cite{LepCou2024,sum/PradeR24}. Moreover, analogical proportions can be extended to general representational frameworks such as propositional logic \cite{diagrams/PradeR24}. 
In this article, we consider another representation setting: the probabilistic framework. 

Analogical proportion-based inference, {viewed as a particular case of analogical jump  \cite{DavRus1987},} has been applied to classification  \cite{ijar/BounhasPR17}. In classification, $a$, $b$, $c$, $d$ are vectors of attribute values describing the examples and the items to be classified. The attributes may be Boolean, nominal, or numeric. Like Bayesian probabilistic inference, analogical proportions-based inference allows classification tasks to be performed successfully \cite{MicBayDelJAIR2008,ijar/BounhasP23,ijar/BounhasP24}.  

This raises the question of the  compatibility between analogical proportions and probabilities. 
More precisely,   
{ considering} a quadruplet of vectors describing four items 
{  such that} analogical proportions hold componentwise, 
{  if each item is associated with a probability,}
one may wonder under what condition  an analogical proportion may also hold   between these probabilities. 
The question extends to probability distributions.

This article\footnote{The paper, for most of its theoretical part (with the exceptions of Properties \ref{linea}, \ref{Prod}, \ref{arigeoind}, Observation \ref{event},  Proposition \ref{TV} and subsection \ref{ad-extension}), results of the reorganization of the main parts of the contents of two conference papers \cite{PraRicEcsqaru2025} \cite{miwai/PradeR25}.
A preliminary version of the first paper, in French,  was presented in a national workshop \cite{PraRicJIAF2025}. The Section  \ref{compat} has been thoroughly revised and the experiments reported in Section \ref{illustration} are entirely new.} is divided into six main sections and two annexes. 
Section \ref{background} restates the necessary background on analogical proportions between Boolean or nominal attribute values. It also discusses analogical proportions between Boolean formulas. and recalls their link with weak multivalued dependencies. Section \ref{numerical} is devoted to 
 numerical analogical proportions, emphasizing the parallel with the Boolean and the nominal case. Numerical proportions can be classically considered either in terms of arithmetic proportions or in terms of geometric proportions, which are first considered in subsection  \ref{basic}. Then existing proposals for numerical analogical proportions are reviewed in subsection \ref{existing}. General requirements for 
 a definition of numerical analogical proportions are presented in subsection \ref{general}. Lastly the interest of a definition combining arithmetic and  geometric proportions {is investigated} in subsection \ref{ari+geo}. 
 
 Section \ref{proba} deals with analogical proportions between probabilities. Subsection \ref{probaval} focuses on analogical proportions between probability values: these values represent the probabilities that an item has particular attribute values. 
 The following subsections explore analogical proportions between four probability distributions over the same finite set.
 {Two definitions are specifically examined}: the first definition is based on the arithmetic proportion only since using only the geometric proportion presents difficulties (subsection \ref{probadis}). 
 This definition  preserves the total variation distance between the distributions; moreover it is used for computing an analogical dissimilarity between four distributions by {quantifying how far they are from forming} an analogical proportion (subsection \ref{ad-extension}). 
 The second definition requires both arithmetic and geometric proportions to be satisfied componentwise (subsection \ref{arigeodist}). This second definition is consistent with the preservation of the Kullback-Leibler divergence between pairs of distributions. Subsection \ref{further} briefly considers the case of continuous probability density functions on the real line, and an  appendix exhibits examples of analogical proportions between probabilities densities.

Section \ref{compat} discusses how an analogical proportion between four profiles
may be associated with an analogical proportion between probabilities. First, we recall how analogical proportions between profiles can be associated with analogical proportions between classes (subsection \ref{classification}). Then we describe a situation where profiles forming
an analogical proportion can  be naturally
associated with probabilities {that are not, in general, themselves}
in analogical proportion unless particular adjustment takes place (subsection \ref{compa}). 
Finally, we present a pattern of analogical inference where profiles forming
an analogical proportion  are associated with probability distributions that may be expected to {also} form an analogical proportion (subsection \ref{ana+distri}). 
Section \ref{illustration} { empirically validates this expectation on}  
two large datasets.
The section ends with a final discussion proposing lines for further research.

\section{Boolean and nominal analogical proportions}\label{background}
An analogical proportion  
is a relation between any four items $a, b, c, d$, which is denoted $a:b::c:d$, and reads ``$a$ is to $b$ as $c$ is to  $d$''. Following a parallel (which dates back to Aristotle) with the  arithmetic ($a - b = c - d$) and the geometric proportions ($\frac{a}{b}=\frac{c}{d} $)  where $a, b, c, d$ are real numbers, analogical proportions  
are supposed to obey the three
core postulates\footnote{ See \cite{AfaLimPraRicIJCAI2022} for a discussion of analogical proportions in natural languages.}: 
\begin{itemize}
   \item \emph{reflexivity}: $a:b::a:b$;
    \item \emph{symmetry}: $a:b::c:d \Rightarrow c:d::a:b$ ;
    \item {stability under \emph{central permutation}}: $a:b::c:d \Rightarrow a:c::b:d$. 
 \end{itemize}
As a consequence, the repeated and alternate application of symmetry and central permutation shows that if $a:b::c:d$ holds, 7 other patterns $c:d::a:b$, $c:a::d:b$, $d:b::c:a$, $d:c::b:a$, $b:a::d:c$, $b:d::a:c$, $a:c::b:d$ hold as well, among the 24 possible orderings of the four items.

Thus, an analogical proportion 
also satisfies \begin{itemize}
    \item   $a:b::c:d$  $\Rightarrow d:b::c:a$ ({\it external permutation}); 
  \item $a:b::c:d \Rightarrow b : a :: d : c$ ({\it internal reversal});
  \item $a:b::c:d \Rightarrow d : c :: b : a$ ({\it complete reversal}).
\end{itemize}
In this paper $a, b, c, d$ are Boolean, categorical, or numerical values, or vectors thereof. When the items are represented by vectors of attribute values  $a=(a_1, \ldots, a_n)$, $b= (b_1, \ldots, b_n)$, $c= (c_1, \ldots, c_n) $, and $d= (d_1, \ldots, d_n)$, the analogical proportion $a:b::c:d$ is defined componentwise:
\begin{Definition}\label{DefVec} $a : b :: c : d $ holds 
if  $\forall i \in [1,n], \  a_i:b_i::c_i:d_i$ holds.
\end{Definition}
In the next two subsections, we successively consider cases where the components $i$ are Boolean and nominal. 
We remove the $i$ subscripts to simplify the notation, since we  only focus on a single component in the next two subsections.

\subsection{Boolean model}\label{Boo}
The reflexivity and the central permutation postulates force  $a:a::b:b$ to be considered as a valid proportion. Thus when $a, b, c, d$ are Boolean variables (i.e. taking values in $\mathbb{B}=\{0, 1\}$), the minimal Boolean model \cite{PraRicIJAR2018} of the three previous postulates is given in Table \ref{truthTableAnalogy}, which exhibits the 6 valuations of $a,b,c,d$ that any model of the 3 postulates must include. Note that symmetry is obtained for free in this table.
{ This table also exhibits the other 10 (= $2^4 - 6$) valuations which are not valid in the minimal model.} 
A logical expression that is true only for the {valid} quadruplets in Table \ref{truthTableAnalogy} is given by the formula \cite{MicPraECSQARU2009}:

\begin{equation} \label{formuledif}
a : b :: c : d=[(a \wedge \neg b)  \equiv (c \wedge \neg d)] \wedge [(\neg a \wedge b)  \equiv (\neg c \wedge d)]    
\end{equation}    

This formula expresses precisely that ``$a$ {differs} from {$b$} as {$c$} {differs} from {$d$}, and {$b$} {differs} from {$a$} as {$d$} {differs} from {$c$}'' (lines 5 and 6  in Table \ref{truthTableAnalogy}) and “when {$a$} and {$b$} do not differ, {$c$} and {$d$} do not differ” (lines 1, 2 and lines 3, 4 in Table \ref{truthTableAnalogy}).
\begin{table}[!ht]
\caption{Valid or invalid Boolean analogical proportions with their $AD$}
\label{truthTableAnalogy}
\centering
$
\begin{array}{c|cccc|c||c|cccc|c}
\multicolumn{6}{c}{\bf Valid}  &  \multicolumn{6}{c}{\bf Invalid} \\
& a & b & c & d & AD & & a & b & c & d  & AD\\
\hline
\hline
1. &0 & 0 & 0 & 0 & 0 & 7. & 0 & 0 & 0 & 1  &  1\\
\hline
2. & 1 & 1 & 1 & 1 & 0 & 8. & 0 & 0 & 1 & 0  &  1\\
\hline
3. & 0 & 0 & 1 & 1 & 0 & 9. & 0 & 1 & 0 & 0  &  1\\
\hline
4. &1 & 1 & 0 & 0 & 0 & 10. & 1 & 0 & 0 & 0 &  1\\
\hline
5. & 0 & 1 & 0 & 1 & 0& 11. & 1 & 1 & 1 & 0 &  1\\
\hline
6. & 1 & 0 & 1 & 0 & 0 & 12. & 1 & 1 & 0 & 1 &  1\\
\hline
& & & & & &  13. &1 & 0 & 1 & 1 &  1\\
\hline
& & &  & & & 14. & 0 & 1 & 1 & 1  &  1\\
\hline
& & & & & & 15. & 1 & 0 & 0 & 1 &  2\\
\hline
& & & & & & 16. & 0 & 1 & 1 & 0 &  2\\
\hline
\end{array}
$
\end{table}
Boolean analogical proportions not only satisfy the three postulates, but also enjoy remarkable properties that are not consequences of the postulates:

- {\it code independence} \cite{PraRicLU2013}: $a : b :: c : d \   \Rightarrow \neg a : \neg b :: \neg c: \neg d $;

- {\it transitivity} \cite{PraRicLU2013}: $a:b::c:d, c:d::e:f \Rightarrow a:b::e:f$;

- {\it C-transitivity} \cite{BarbotMPAIJ19}\cite{ijcai/SchockaertIG21}: $a:b::c:d, a:b'::c':d \Rightarrow b:b'::c':c$.\footnote{This is easy to check. If $a = d = 1$ (resp. 0) then $b= c=b'=c' = 1$ (resp. 0) and then $b:b'::c':c$ holds. If $(a, d) = (0, 1)$ then $(b, c) = (0, 1)$ or $(b, c) = (1, 0)$, and $(b', c') = (0, 1)$ or $(b, c) = (1, 0)$, which yields 4 possible cases where it can be checked that $b:b'::c':c$ holds. The case  $(a, d) = (1, 0)$ is similar.} 

Note that, in agreement with internal reversal property, C-transitivity also yields $b':b::c:c'$.\\ 

Analogical inference is based on finding $d$ given $a$, $b$, $c$ {and relies on the minimal model}.
Table \ref{truthTableAnalogy} shows that if it exists, $d$ is unique, but that there does not always exist $x$ such that $a : b :: c : x$ is true.
{Indeed,} there is no solution in $ \{0, 1\}$ for $1 : 0 :: 0 : x$ and $0 : 1 :: 1 : x$.

The equation $a : x :: y : d$ true has 3 kinds of solution depending on the values of $a$ and $d$; it  corresponds to 3 types of patterns in  Table \ref{truthTableAnalogy}:

- if $a = d$ then $x= y= a = d$ (lines 1 and 2 in  Table \ref{truthTableAnalogy}) ;
 
- if $a \neq d$ then $x = a$ and $y=d$ (lines 3 and 4 in  Table \ref{truthTableAnalogy}), 

- \quad\quad\quad\quad\ \ or $x = d$ and $y = a$ (lines 5 and 6 in  Table \ref{truthTableAnalogy}). \\

Besides, the concept of {\it analogical dissimilarity}, introduced by Miclet {\it et al.} in \cite{MicBayDelJAIR2008},  arose from an attempt to formalize a relaxed version of analogy, capturing linguistic expressions of the form ``$a$ is to $b$ {\it almost} as $c$ is to $d$''. To remain consistent with the notion of analogical proportions, the authors proposed quantifying the term ``almost'' with a positive real value $AD(a, b, c, d)$. This value is equal to 0 when the analogy holds exactly and increases as the four objects deviate from a true analogical relationship. In the Boolean case, it corresponds to ``the minimal number of bits that have to be switched in order to produce an analogical proportion''.

A common definition for $AD$, which  applies to Boolean items, as well as to numerical items as we shall see, is $AD(a,b,c,d)=|(b-a)-(d-c)|$.
 In Table \ref{truthTableAnalogy},  the columns $AD$, where the values  vary  between $0$ and $2$, give the analogical dissimilarity for  each 4-tuple. $AD(a, b, c, d)=0$  when the analogical proportion is true, $AD(a, b, c, d)= 1$ for the 8 patterns with odd numbers of 0 and  1 (lines 7 to 14); and AD is maximal when the changes from $a$ to $b$, and from $c$ to $d$ are in opposite directions, i.e., 
 $AD(0110) = AD(1001) = 2$ (lines 15 and 16).

\subsection{Nominal model}\label{nominal}
As can be seen in Table \ref{truthTableAnalogy}, the patterns that make the analogical proportion true are of three types,  namely $ssss$, $sstt$, and $stst$, with $s,t \in \mathbb{B}=\{0, 1\}$, the last two motifs being exchanged by a central permutation. 
 It is now easy to extend to a nominal attribute $\mathcal{A}$ taking its values in a finite domain $\mathcal{D_A}$ whose cardinality can be greater than 2.
Then 
$a : b :: c : d$ is true for a nominal variable associated with an attribute $\mathcal{A}$ if and only if (as first suggested in \cite{PirYvo99}) :
\begin{equation}
(a,b,c,d)\in \{(s,s,s,s),(s,t,s,t),(s,s,t,t)   | s,t \in \mathcal{D_A}\}
\label{nomi}
\end{equation} 
where $s, t$ are  distinct values in {  $\mathcal{D_A}$}.
The condition (\ref{nomi}) generalizes the Boolean case and guarantees the satisfaction of the postulates of analogical proportions.

\subsection{Analogical proportions between Boolean formulas}\label{dia}
Rather than dealing with individual items, each described by a vector of attribute values, one may consider sets of items. 
A logical formula can provide an intensional description of a set of items.
The formula (\ref{formuledif}) of a Boolean analogical proportion also applies to Boolean variables representing any propositional formulas. It states that for the proportion $a:b::c:d$ to be satisfied, the two conditions $(a \wedge \neg b) \equiv (c \wedge \neg d)$ and $(\neg a \wedge b) \equiv (\neg c \wedge d)$ must be satisfied. 

\begin{Example} \label{APLogic} Consider the example of an analogical proportion that appears in \cite{PraRicLU2013} and that is also valid for any lattice structure \cite{BarbotMPAIJ19}:
$$ p \vee q :  p :: q:  p \wedge q$$

It can be easily checked that it holds in the Boolean setting, since $(p \vee q) \wedge \neg p \equiv  q \wedge \neg ( p \wedge q) \equiv  \neg  p\wedge q$ and $ p \wedge \neg ( p \vee q ) \equiv  ( p \wedge q) \wedge \neg q\equiv \bot$. \hfill $\Box$
\end{Example}  

A propositional formula with $n$ variables can be described over the $2^n$ interpretations of the language under consideration.
This is the basis of a bit string representation of logical formulas. For example, if we have two Boolean variables $p$ and $q$, we have the interpretations $pq$, $p\neg q$, $\neg pq$, $\neg p \neg q$. Taking them in this order, $p$ is encoded by $1100$, $p \wedge q$ by $1000$, $p \vee q$ by $1110$, $q$ by $1010$, etc. 

We can show that if  4 logical formulas $a,b,c,d$ involve a set of $n$ variables, and that they are therefore representable by bit strings of size $2^n$, having an analogical proportion  between the 4 logical formulas $a,b,c,d$ amounts to having
an analogical proportion $a_i:b_i::c_i:d_i$ on each component $i$ of the bit strings that represent the formulas, that is to say for each possible interpretation \cite{diagrams/PradeR24} (note that $i$ here refers to an interpretation, and not to a Boolean variable as in the Definition \ref{DefVec}).

\begin{Example} Indeed, in the Example \ref{APLogic}  above, we have 1110 : 1100 :: 1010 : 1000, component by component. \hfill $\Box$
\end{Example}  

Given three propositional formulas, we can find, if it exists, the propositional formula forming an analogical proportion with these three formulas; 
{in the same way we can semantically verify whether four propositional formulas form an analogical proportion; see \cite{diagrams/PradeR24} for details.}

\subsection{Weak multivalued dependency}
Interestingly enough, as shown in \cite{LinkPraRicSUM22}, the notion of analogical proportion coincides with the notion of weak multivalued dependency \cite{FischerG84} in the database field. For the sake of brevity, we just recall the definition and give an example.

Consider a relational table $r$ with a set of attributes $R$, where $X$, $Y$, $Z$ are attributes (or subsets thereof). Then
a weak multivalued dependency \cite{FischerG84} $X \twoheadrightarrow_w Y$ holds in $r$  if, for all tuples $t_1, t_3$  in $r$  
such that $t_1[XY]=t_2[XY]$\footnote{$t_i[XY]$ denotes the restriction of the tuple $t_i$ to attributes in $X\cup Y$.} and $t_1[X(R\setminus Y)]=t_3[X(R\setminus Y)]$ there is some tuple $t_4$ in $r$ such that $t_4[XY]=t_3[XY]$ and $t_4[X(R \setminus Y)]=t_2[X(R\setminus Y)]$. This also expresses a logical independence of $Y$ and $Z$ given $X$.

This is illustrated by the 4 tuples  of the database-like  example in Table \ref{Example} with 3 attributes $X=\{Education\}$, $Y=\{Gender\}$, $Z=\{Age\}$ and domains $\mathcal{D}_{Education}=\{low, medium,  high\}$, $\mathcal{D}_{Gender}=\{F, M\}$,  $\mathcal{D}_{Age}=\{\leq 40,>40\}$. It can be checked that $X \twoheadrightarrow_w Y$ holds, taking   $t_1 = a$, $t_2 = b$, $t_3 = c$, $t_4 = d$, with $XZ = X(R\setminus Y)$. Exchanging $t_2$ and $t_3$ and $Y$ and $Z$ we observe that $X \twoheadrightarrow_w Z$ holds as well in $r$. 
\begin{table}[!ht]
\caption{A database  example}
\label{Example}
\centering
$
\begin{array}{c||c|c|c|c|}

& X = \mbox{\{Education\} } &   Y  =\mbox{ \{Gender\}}  &  Z = \mbox{ \{Age\}}   \\
\hline
a    & \mbox{high} & M     &    \leq 40  \\
\hline
b    & \mbox{high} &   F   &    \leq 40 \\
\hline
c      & \mbox{high} & M     &    > 40 \\
\hline
d & \mbox{high} & F     &   > 40   \\

\hline
\end{array}
$
\end{table}
As can be seen in Table \ref{Example}, $a:b::c:d$ holds and the table presents, in column, component by component, the three types of patterns allowed by the nominal model. Intuitively $a,b,c,d$ represent profiles in a given population and this is where some probability of occurrence in the population could be associated with these 4 profiles, as discussed in Section \ref{compat}.
\section{Numerical analogical proportions}\label{numerical}
In order to accommodate numerical attributes, various proposals for extending analogical proportions to real numbers have been made that will be reviewed, before  proposing a new general definition and presenting some new results.  We first consider the important and classical cases  of arithmetic  and geometric analogical proportions. In this section, $a, b, c, d$, although there is no subscript $i$, refer to {\it single} real numbers. 

\subsection{Arithmetic  and geometric analogical proportions} \label{basic}
Among the numerical proportions, two proportions which involve the four elementary operations have an important place ($a$, $b$, $c$, $d$ are real numbers here):
\begin{itemize}

\item  the arithmetic proportion $ a-b= c-d$, denoted here $a : b ::_{ari} c : d$. 
This clearly fits with the analogical dissimilarity previously introduced at the end of subsection \ref{Boo} 
since $AD(a, b, c, d) = 0  \Longleftrightarrow a : b ::_{ari} c : d$;

\item the geometric proportion $\frac{a}{b}=\frac{c}{d}$,  denoted here $a : b ::_{geo} c : d$.

We will take the convention $\frac{0}{0}=1$ considering that $\lim_{x \to 0} \frac{x}{x}=1$.
\end{itemize}
It is easy to verify that these two proportions
{are indeed analogical proportions since they satisfy the three postulates recalled at the beginning of Section \ref{background}.}
They are also transitive and C-transitive.
As in the Boolean case, they express the identity of the comparison results of $a$ and $b$, and of $c$ and $d$, either in terms of differences or in terms of ratios.
We can also verify the agreement with the truth table of the analogical proportion (Table \ref{truthTableAnalogy}): for the 6 patterns that makes it true, the equalities of the numerical proportions are satisfied (we take $\frac{1}{0}=+\infty$) and there are no equalities for the other possible quadruplets.

Unlike the Boolean case, the number frame offers solutions for continuous analogical proportions of the form $a:m::m:b$. Indeed, they lead to the definition of the arithmetic mean ($m= \frac{a+b}{2}$) and the geometric mean ($m= \sqrt{ab}$).

Because  these proportions satisfy stability under central permutation, one might be tempted to define them by combining  extremes and means like:

-  $ a+d= b+c$  for arithmetic proportion;

- $a\cdot d=b\cdot c$ for geometric proportion.

However, for geometric proportions, this allows us to have  
$0\cdot 1 = 0\cdot 0$, whereas \\$0:0::0:1$ is certainly not
an analogical proportion. We therefore see that the equivalence between the "quotient" and "product" forms excludes $0$ for geometric proportions. {In subsection \ref{general}, we will investigate general conditions for avoiding this issue when combining extremes and means.}

These two proportions are exchanged by a logarithmic/exponential transformation:
   
    if $\frac{a}{b} = \frac{c}{d}$ then we have $\ln(a) - \ln(b)= \ln(c) - \ln(d)$,
    
    if $a -b= c-d$ then we have  $\frac{\mathrm{e}^{a}}{\mathrm{e}^{b}}= \frac{\mathrm{e}^{c}}{\mathrm{e}^{d}}.$\\
    
When studying functions or properties defined on real numbers, it is usual to consider notions such as symmetry, linearity, etc. In the context of analogical proportions over real numbers, symmetry is satisfied by the two classical proportions.
Linearity, on the other hand, is satisfied by the arithmetic proportion in the following sense:
\begin{Prop}\label{linea}
$$\forall \alpha, \beta , a, b, c, d, a', b', c', d' \in \mathbb{R}:$$
$$(a:b::_{ari}c:d) \wedge (a':b'::_{ari}c':d') \implies 
(\alpha a + \beta a') : (\alpha b + \beta b')::_{ari}(\alpha c + \beta c'):(\alpha d + \beta d')$$
\end{Prop}
{\bf Proof:} Obvious due to the definition of the arithmetic proportion.\hfill$\Box$\\

This has immediate consequences such as:
\begin{itemize}
    \item $(a:b::_{ari}c:d) \wedge (a':b'::_{ari}c':d') \Longrightarrow (a\! + \! a') : (b \!+ \! b')::_{ari}(c \! +\!  c'):( d\! + \! d')$ $ (\alpha=\beta=1)$
    \item $(a:b::_{ari}c:d) \implies - a: - b::_{ari}- c: -d $ $(\alpha=0, \beta=-1)$
    \item $(a:b::_{ari}c:d) \implies \forall r \in \mathbb{R},   r- a: r - b::_{ari}r - c:r-d \mbox{ (because $r:r::_{ari}r:r$)}$
\end{itemize}
The third implication can be viewed as a counterpart of code independence in the Boolean case. This property for $r=1$ will also have a natural interpretation when probabilities are considered instead of basic numerical values.\\
Regarding geometric analogy, we have a partial form of  linearity such as:
$$\forall k \in \mathbb{R}^*, a:b::c:d \implies ka: kb::_{geo} kc :kd$$

Moreover, this is a particular case (since $k:k::_{geo} k:k$) of a more general statement: 

\begin{Prop} \label{Prod} $\forall  a, b, c, d, a', b', c', d \in \mathbb{R}$, if $ a  : b  ::_{geo} c  : d $ and $a'  : b'  ::_{geo} c' : d'$ then $a  a' : b b'  ::_{geo} c c' : d d'$.
 \end{Prop}
{\bf Proof:} Obvious due to the definition of the geometric proportion.\hfill$\Box$\\

Lastly, let us recall that the notation $a : b :: c : d$ was used (at least) until Gaspard Monge \cite{aicom/PradeR21} to mean $\frac{a}{b}=\frac{c}{d}$
(the notation `::' was introduced by the English mathematician William Oughtred at the beginning of the 17th century). A proof of the Pythagorean theorem in terms of geometric proportions is given as an example to illustrate their expressive power.

\begin{Example}
Consider a right triangle ABC at C, and the height from C, as shown in Figure \ref{Pytha}.
The angles $\widehat{HAC}$ and $\widehat{HCB}$ are equal because they have the same complement $\widehat{ACH}$. Triangles $ABC$, $ACH$ and $CBH$ are therefore similar.

\begin{figure}
    \caption{Pythagorean theorem}
    \label{Pytha}
    \centering
\begin{tikzpicture}[scale=1 ]
\centering
\coordinate [label=left:$A$] (A) at (-2cm,-1.cm);
\coordinate [label=right:$B$] (B) at (2.33cm,-1.0cm);
\coordinate [label=above:$C$] (C) at (1cm,1.0cm);
\coordinate [label=below:$H$] (H) at (1cm, -1cm);
\draw (A) -- node[sloped,above] {b} (C) -- node[sloped,above,] {a} (B) -- node[below] (c) {c} (A);
\draw[dashed] (C) -- node[sloped,above] {h} (C|-A) ;
 \draw [dashed] (B) --  node [sloped, above] (x) {x} (C|-A);

\end{tikzpicture}
 \end{figure}
In Figure \ref{Pytha}, two sides of one of the three right triangles are therefore proportional to the two corresponding sides of another right triangle, and therefore form a geometric proportion, in terms of lengths. Thus one can write 

$CA : AB :: AH : CA$, i.e., $b : c :: c - x : b$
which leads to $b^2 = c^2 - cx$.

But we also have  $AB : CB :: CB : BH$, i.e.,  
 $c : a :: a : x$ and thus $cx = a^2$.

\noindent Hence we obtain $c^2 = a^2 + b^2$. 

Note the use of a continuous proportion  ($c : a :: a : x$).
\hfill $\Box$
\end{Example}

\subsection{Review of existing proposals} \label{existing}
We can distinguish between two types of proposals. On the one hand, there are multivalued extensions of the Boolean analogical proportion \cite{fss/DuboisPR16} that allow us to evaluate on $[0, 1]$ to what extent an analogical proportion between four numbers in $[0, 1]$ approximately holds. On the other hand two general frameworks \cite{LepCou2024}, \cite{sum/PradeR24} have recently been proposed to define whether or not an analogical proportion holds between four numbers in $[0, 1]$. 
These frameworks include arithmetic and geometric proportions as important special cases. 
\subsubsection{Multiple-valued logic extensions}.  
There exist mainly two types of multiple-valued logic extension of Boolean formula (\ref{formuledif})  to real numbers $a, b, c, d$ in the interval $[0, 1]$; see \cite{DubPraRicFSS2016} for justifications. There are respectively called liberal and conservative, denoted respectively with an `L' and a `C' subscript. They are given by:
\begin{equation}\label{form1extension}
a : b ::_{L} c : d =
\begin{cases}

1- \mid(a + d) - (b+c)\mid, \\ \quad \mbox{ if }a \geq b \mbox{ and } c \geq d, \mbox{ or } a \leq b \mbox{ and } c \leq d\\

1- \max(\!\mid \!a - b\!\mid,  \mid \!c - d\!\mid), \\ \quad \mbox{ if } a \leq b \mbox{ and } c \!\geq d, \mbox{ or } a \geq b \mbox{ and } c \leq d
\end{cases}
\end{equation} \vspace{-0.3cm}
\begin{equation}\label{form2extension}
a : b ::_C c : d = \min(1-|\max(a,d)-\max(b,c)|,1-|\min(a,d)-\min(b,c)|)
\end{equation}

When $a, b, c, d \in \mathbb{B}=\{0, 1\}$, we recover the truth table of (\ref{formuledif}) in both cases. $a : b ::_{L} c : d =1 $ if and only if $a-b=c-d$. This corresponds to the arithmetic  proportion. Thus, the expression (\ref{form1extension}) agrees with analogical dissimilarity since we have $AD(a, b, c, d) = 0 \Longleftrightarrow  a : b ::_L c : d =1$.

Regarding (\ref{form2extension}), $a : b ::_C c : d =1$ if and only if ($a=b$ and $c=d$) or ($a=c$ and $b=d$). This corresponds to the 3 possible patterns that make true a nominal analogical proportion
($(a, b, c, d) \in \{(s,s,s,s),(s,t,s,t),(s,s,t,t)\}$). Note that we only have $a : b ::_C c : d =1  \Longrightarrow AD(a, b, c, d) = 0$.  

One advantage of the   formulas (\ref{form1extension}) or (\ref{form2extension}) is that they offer a version that is graded (in $[0, 1]$) of the analogical proportion,  which enables us to have ``approximate'' proportions that hold to some degree.

\subsubsection{Generalized means-based definition}. Recently, an appealing  proposal \cite{LepCou2024,LepCou2025} has defined an analogical proportion between real numbers, not necessarily in $[0, 1]$, using generalized means. A generalized mean applied to two positive real numbers $x, y$ is a two-place function defined by  $M_p(x, y) = [\frac{1}{2} (x^p  + y^p)]^{1/p}$ where $p \in \mathbb{R}$ is the exponent of the mean. For  $p= -1, 1, 2$,  we 
obtain the harmonic, arithmetic, and quadratic mean respectively. The geometric mean is obtained as a limit for $p=0$.  We have $\forall p
\in \mathbb{R}, \min(x,y) \leq  M_p(x, y) \leq \max(x, y)$, and the bounds are reached for  $p=-\infty $ and $p=+\infty$  respectively. The definition of analogical proportion between four positive real numbers is then:
\begin{Definition}
Let $a, b, c, d  \in \mathbb{R}^+$,  $a, b, c, d $ are said to satisfy an analogical proportion if and only if $\exists p \in \mathbb{R}, M_p(a,d) = M_p(b,c)$.
\end{Definition}
 It can be easily seen that this definition ensures the satisfaction of the three postulates of analogical proportions. In \cite{LepCou2024}, it is established  that  given $a < b < c < d $,  $\exists !p  \mbox{  such that  }  M_p(a,d) = M_p(b,c) $\footnote{However, note that although an analogical proportion has 8 equivalent forms (see beginning of Section \ref{nominal}), it may be case that none of them are in increasing  order.}. Since $M_p(a,d) = M_p(b,c)$ amounts to write $a^p + d^p = b^p + c^p$, it means that $a^p, b^p, c^p, d^p$ form an arithmetic proportion. 
 Note that this definition, as the next one,  is not graded, but parameterized. 

\subsubsection{Triangular norm-based definition} Another definition \cite{sum/PradeR24}, which is also in agreement with  the three postulates of analogical proportions, is based on triangular norms and conorms\footnote{A  triangular norm  $T$ is a binary operation on $[0, 1]$, i.e., $T: [0, 1] \times [0, 1] \to [0, 1]$, that satisfies the following conditions, it is: i) 
commutative: $T(a, b) = T(b, a)$; ii)
associative: $ T(a, T(b, c)) = T(T(a, b), c)$; iii) non-decreasing in both arguments: $T(a, b)  \leq T (a', b')$ if $a  \leq a'$ and $b  \leq b'$; iv) 
such that $T (a, 1) = T (1, a) = a.$ 
 A binary operation $S$ on [0, 1] is called a triangular conorm if it satisfies the same properties as the ones of a triangular norm except for the boundary conditions, which become  $S (0, a) = S (a, 0) = a. $}. The definition is as such:
\begin{Definition} \label{tnorm}
Let $a, b, c, d $ be four numerical values in [0, 1].  $a, b, c, d $ are said to satisfy an analogical proportion based on the triangular norm $T$ (and its dual triangular conorm $S_T$), iff $ T(a, d) =T(b,  c)$ and $S_T(a, d) = S_T( b, c))$ \cite{sum/PradeR24}.
\end{Definition}
where $S_T(x, y) = 1 - T(1 - x, 1 - y)$. For $T(x, y) = xy$ in Definition \ref{tnorm}, $a, b, c,  d$ form both an arithmetic proportion, and a geometric proportion, which forces $(b,c)=(a,d)$ or $(b,c)=(d,a)$;  this corresponds to the nominal patterns of subsection \ref{nominal}; see also Proposition \ref{arigeo} below. For the {\L}ukasiewicz  triangular norm $T(x, y) = \max(0, x+y -1)$ in Definition \ref{tnorm}, $a, b, c, d $ form an arithmetic proportion. The use of a parameterized family of triangular norms enables us to have a parameterized definition of an analogical proportion between numbers in $[0, 1]$; see \cite{sum/PradeR24}.

\subsection{General definition}  \label{general}
An analogical proportion between real numbers $a,b,c,d$ is  defined in the last two  proposals above by equating an expression involving only the extreme elements $(a,d)$ with the same expression involving only the mean elements $(b,c)$ (e.g., $a+d=b+c$ for arithmetic proportion).
This observation raises at least two questions that we address now: What are the minimal requirements on a 
two-place  function $F$ for defining
a numerical analogical proportion denoted $a:b::^Fc:d$ as $F(a,d)= F(b,c)$? Can we consider combinations of numerical proportions other than the arithmetico-geometric one? \vspace{-0.4cm}
\subsubsection{General numerical definition} For the first question, we get the following:
\begin{Proposition} Let $a, b, c, d \in \mathbb{R} $, and $F$  a two-place function. Then 
$a:b::^Fc:d$ $\mbox{ iff } F(a,d)= F(b,c)$ 
defines an analogical proportion over $\mathbb{R}$ iff $F$ is symmetric, i.e., {$\forall x, y \in \mathbb{R}, F(x,y)=F(y,x)$}.
\end{Proposition}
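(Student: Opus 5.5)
We prove the two directions separately, checking the three postulates of Section \ref{background} (reflexivity, symmetry, stability under central permutation) for the relation $a:b::^Fc:d \Leftrightarrow F(a,d)=F(b,c)$.

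\emph{Sufficiency.} Assume $F$ is symmetric. Reflexivity $a:b::^Fa:b$ requires $F(a,b)=F(b,a)$, which is exactly symmetry of $F$. Symmetry of the proportion, $a:b::^Fc:d \Rightarrow c:d::^Fa:b$, requires passing from $F(a,d)=F(b,c)$ to $F(c,b)=F(d,a)$. This follows by applying symmetry of $F$ to both sides and then using symmetry of equality. Central permutation, $a:b::^Fc:d \Rightarrow a:c::^Fb:d$, needs $F(a,d)=F(c,b)$ from $F(a,d)=F(b,c)$. This is again immediate since $F(b,c)=F(c,b)$. Hence all three postulates hold, and the relation defines an analogical proportion.

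\emph{Necessity.} Assume the relation satisfies the postulates. Reflexivity must hold for every pair $x,y\in\mathbb{R}$, so $x:y::^Fx:y$ holds. Unfolding the definition with $a=x$, $b=y$, $c=x$, $d=y$ gives $F(x,y)=F(y,x)$ for all $x,y$, so $F$ is symmetric. Reflexivity alone therefore forces symmetry of $F$, and we do not need to invoke the other two postulates.

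There is no real obstacle; the argument is a direct unfolding of definitions. The only point needing care is identifying the arguments correctly: the extremes $(a,d)$ are compared with the means $(b,c)$, so in reflexivity $a:b::a:b$ the extremes are $(a,b)$ and the means are $(b,a)$.
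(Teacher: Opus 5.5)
Your proof is correct and follows essentially the same route as the paper. Necessity comes from reflexivity, where $x:y::^F x:y$ unfolds to $F(x,y)=F(y,x)$; sufficiency comes from checking reflexivity, symmetry, and central permutation directly, each using only the symmetry of $F$ and of equality.
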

{\bf Proof:} 
1. Having an analogical proportion implies the symmetry of $F$: the reflexivity of the proportion means 
$a:b::^Fa:b$ is true, and then $F(a,b)= F(b,a)$. 

2. The symmetry of $F$ implies we have an analogical proportion:  

      - reflexivity: $a:b::^Fa:b$ is true if $F(a,b)= F(b,a)$, which is true due to symmetry of $F$.
   
    - symmetry: $a:b::^Fc:d \implies c:d::^Fa:b$ is true due to symmetry of $F$ and $=$: $F(c,b)=F(b,c)=F(a,d)=F(d,a)$.
    
    - central permutation: $a:b::^Fc:d \implies a:c::^Fb:d$ is also true because of symmetry of $F$: then $F(a,d)= F(b,c)=F(c,b)$ \hfill $\Box$\\

This provides a very general process to build analogical proportions over $\mathbb{R}$ or to check that a given proportion is an analogy. 
When considering $\mathbb{B}$ as a subset of
$\mathbb{R}$, $F$ being symmetric ensures $a:b::^Fc:d$ holds for the 6 valid valuations of Table \ref{truthTableAnalogy}.
However it does not ensure that the 10 remaining invalid valuations 
$0001,0010,0100,1000,1110,1101,1011,0111,1001,0110$ do not hold.
Some requirements are necessary to fully fit with the Boolean case:
\begin{Proposition} $a:b::^Fc:d$ fits with the Boolean case iff 
$F(1,1) \neq F(0,0)$, $F(1,0) \neq F(1,1)$, and $F(1,0) \neq F(0,0)$. 
\end{Proposition}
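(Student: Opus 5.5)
We read ``fits with the Boolean case'' as: for $a,b,c,d\in\mathbb{B}$, the relation $F(a,d)=F(b,c)$ holds exactly on the 6 valid valuations of Table \ref{truthTableAnalogy} and fails on the 10 invalid ones. Since $F$ is symmetric (by the previous Proposition, otherwise $::^F$ is not an analogical proportion), $F$ restricted to $\mathbb{B}^2$ takes only three values: $F(0,0)$, $F(1,1)$ and $F(1,0)=F(0,1)$. So every Boolean valuation reduces to comparing two of these three values, and the proof is a finite case analysis over the 16 valuations.

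\emph{Valid valuations.} As already noted after the previous Proposition, symmetry of $F$ gives $F(a,d)=F(b,c)$ on all 6 of them. For instance, $0011$ gives $F(0,1)=F(0,1)$, $0101$ gives $F(0,1)=F(1,0)$, and $0000$, $1111$ are trivial. So only the 10 invalid valuations impose conditions.

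\emph{Invalid valuations.} For each, I compute the pair $\{F(a,d),F(b,c)\}$ and sort the valuations into three groups.
\begin{itemize}
\item $0001$, $1000$ give $F(0,1)$ vs.\ $F(0,0)$; $0010$, $0100$ give $F(0,0)$ vs.\ $F(1,0)$. So these four all require $F(1,0)\neq F(0,0)$.
\item $1110$, $0111$ give $F(1,0)$ vs.\ $F(1,1)$; $1101$, $1011$ give $F(1,1)$ vs.\ $F(0,1)$. So these four all require $F(1,0)\neq F(1,1)$.
\item $1001$ gives $F(1,1)$ vs.\ $F(0,0)$, and $0110$ gives $F(0,0)$ vs.\ $F(1,1)$. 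So both require $F(1,1)\neq F(0,0)$.
\end{itemize}
The 10 invalid valuations therefore fail to satisfy $::^F$ if and only if all three inequalities of the statement hold.

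\emph{Conclusion.} Necessity: if one inequality fails, the corresponding group above yields an invalid valuation (e.g.\ $1001$, $1110$ or $0001$) on which $::^F$ holds. Sufficiency: if all three hold, every invalid valuation compares two distinct values among $F(0,0)$, $F(1,1)$, $F(1,0)$, so $::^F$ fails on all of them.

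The only delicate point is the careful bookkeeping of the 10 invalid rows, i.e.\ checking that each reduces to exactly one of the three inequalities and none is overlooked. Once symmetry of $F$ is used to identify $F(0,1)$ with $F(1,0)$, the rest is routine.
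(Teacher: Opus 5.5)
Your proof is correct and matches the paper's argument. The paper likewise splits the ten invalid valuations into the groups $\{1001,0110\}$, $\{1110,0111,1101,1011\}$ and $\{1000,0001,0010,0100\}$, each ruled out by exactly one of the three inequalities; you additionally spell out the necessity direction and the use of the symmetry of $F$ for the valid rows $0101$ and $1010$, both of which the paper leaves implicit.
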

{\bf Proof:} i) $F(1,1) \neq F(0,0)$ forbids $1:0::^F0:1$ and $0:1::^F1:0$ to hold.

 ii) $F(1,0) \neq F(1,1)$ forbids $1:1::^F1:0$, $0:1::^F1:1$, $1:1::^F0:1$, $1:0::^F1:1$ to hold.

    iii) $F(1,0) \neq F(0,0)$ forbids $1:0::^F0:0$, $0:0::^F0:1$, $0:0::^F1:0$, $0:1::^F0:0$ to hold.
\hfill $\Box$\\
Clearly, $F(x, y) =x + y$ is the simplest function satisfying the above requirements; it leads to the arithmetic proportion.

\subsection{Arithmetico-geometric analogical proportions}  \label{ari+geo}
We end this section by a proposition showing that numerical analogical proportions that are \textit{both} arithmetic and geometric are highly constrained. However we shall see in Section \ref{probadis} that this result still  opens the possibility of an interesting definition of analogical proportions between probability densities.

\begin{Proposition} \label{arigeo} Let $a, b, c, d$ be four real numbers. Then these numbers are in \emph{arithmetico-geometric} analogical proportion, denoted
$a : b ::_{arigeo} c : d $ if  we have
$a  - b  = c  - d $ and
$\frac{a }{b }= \frac{c}
{d}$. The only 4-tuples of real numbers $a, b, c, d$ such that $a : b ::_{arigeo} c : d$ holds, are such that:
 
 - { case 1:} either   $a=b$ and $c=d$, 
 
 - { case 2:} or   $a=c$ and $b=d$.   
 
\noindent and these are the only solutions.
\end{Proposition}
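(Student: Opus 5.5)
Both cases are checked directly. If $a=b$ and $c=d$, then $a-b=0=c-d$ and $\frac{a}{b}=1=\frac{c}{d}$, using the convention $\frac{0}{0}=1$ when $a=b=0$ or $c=d=0$. If $a=c$ and $b=d$, both equalities are identities. So each listed pattern is a solution, and it remains to show there are no others.

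For the converse I will write $k=a-b=c-d$. If $k=0$, then $a=b$ and $c=d$, which is case 1, so I may assume $k\neq 0$. Then $a\neq b$ and $c\neq d$, and I substitute $a=b+k$ and $c=d+k$ into the geometric condition.

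\emph{Generic subcase: $b\neq 0$ and $d\neq 0$.} The ratio equality becomes $\frac{b+k}{b}=\frac{d+k}{d}$, that is, $1+\frac{k}{b}=1+\frac{k}{d}$. Hence $\frac{k}{b}=\frac{k}{d}$, and since $k\neq 0$ this gives $b=d$. Then $a=b+k=d+k=c$, which is case 2. Equivalently, one can clear denominators to get $(b+k)d=(d+k)b$, i.e.\ $kd=kb$; this is valid precisely because $b,d\neq 0$.

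\emph{Degenerate subcase: $b=0$ or $d=0$.} This is the step that needs care, because the paper itself warned that product forms misbehave at $0$. I will use the conventions already fixed in subsection \ref{basic}: $\frac{0}{0}=1$ and $\frac{x}{0}=+\infty$ for $x\neq 0$. Reading the earlier verification against Table \ref{truthTableAnalogy}, I take this to cover $x>0$, and I adopt $\frac{x}{0}=-\infty$ for $x<0$, which is what the sign-sensitive reading requires.

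Suppose $b=0$. Then $a=k\neq 0$, so $\frac{a}{b}=\pm\infty$. For $\frac{c}{d}$ to equal this infinite value we need $d=0$, since $\frac{c}{d}$ is finite whenever $d\neq 0$. Then $b=d$ and $c=k=a$, which is case 2 again; the sign of the infinity matches automatically because $a=c$. The case $d=0$ is symmetric. If $b\neq 0$ but $d=0$, the right-hand side is infinite or $1$ while the left-hand side is a finite real different from $1$ (because $k\neq 0$), so no solution arises.

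Thus every solution falls under case 1 or case 2. The only delicate point is this handling of zeros and infinities; everything else is elementary algebra.
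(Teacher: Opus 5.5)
Your proof is correct, but it takes a different route from the paper's.

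\textbf{Your route.} You parametrize by the common difference $k=a-b=c-d$. This reduces the ratio condition to $k/b=k/d$, and you treat vanishing denominators in a separate case analysis under the conventions $\frac{0}{0}=1$ and $\frac{x}{0}=\pm\infty$.

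\textbf{The paper's route.} The paper passes to the ``extremes and means'' form $a+d=b+c=s$, $ad=bc=p$. It then notes that $a$ is a root of $x^2-sx+p=0$, whose discriminant is $(b+c)^2-4bc=(b-c)^2$. Hence $a\in\{b,c\}$, and with $d=s-a$ this gives $\{a,d\}=\{b,c\}$.

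\textbf{What each buys.} The paper's Vieta-type argument is symmetric and division-free once the product form is available. It is also what makes Proposition~\ref{new} cheap: the harmonic, quadratic and cubic conditions, combined with $a+d=b+c$, are shown to force $ad=bc$, which leads back to the same quadratic.

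What the paper leaves implicit is the passage from $\frac{a}{b}=\frac{c}{d}$ to $ad=bc$ when a denominator vanishes, even though it had flagged the danger of the product form at $0$ in subsection~\ref{basic}. One can check that the passage is valid under the stated conventions: for instance, $b=0\neq d$ forces $a=0$ and $c=d$, so both products are $0$.

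Your case analysis makes this zero-handling explicit and also verifies sufficiency, which the paper skips. The cost is that your argument does not extend as directly to the other means.

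Your extra convention $\frac{x}{0}=-\infty$ for $x<0$ is never actually used. When $b=d=0$, the arithmetic condition already forces $a=c$.
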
 \vspace{-0.2cm}
{\bf Proof} 
Setting \(a  + d = s \) and \(a d = p  \), we obtain \(d  = s  - a  \), then \(a (s- a) = p  \), that is, $a^2 -s\cdot a  + p  =0$.
Thus, $a$ must be a solution to the quadratic equation \cite{sum/PradeR24}:  
$x^2 - s x + p = 0$. 
Its discriminant is equal to $\Delta=s^2 -4p= (b + c)^2 - 4b\cdot c= (b - c)^2$. The two solutions are therefore $a =\frac{s \pm \sqrt{\Delta}}{2}=\frac{b  +  c \pm (b  - c)}{2}$. This gives $a=b$ or $a=c$.
Thus, we should have \((a, d)= (b, c) \) or \((d, a) =(b, c) \).  
$\hfill\Box$\\
The arithmetico-geometric analogical proportions are thus valid for the same kind of patterns as Boolean or nominal analogical proportions.

\begin{Example}

Let us consider the following  vectors $a, b, c, d$ with 4 components ($n=4$):

$a_1= 0.15; a_2=0.10; a_3=0.70; a_4 = 0.05.$

$b_1=0.55 ; b_2= 0.20; b_3= 0.20; b_4 = 0.05.$

$c_1= 0.25; c_2=0.10 ; c_3=0.60; c_4 = 0.05.$

$d_1=0.65 ; d_2=0.20; d_3= 0.10; d_4 = 0.05.$\\
{ Componentwise,}
we have $a_i- b_i = c_i - d_i$  for $i= 1, 2, 3, 4$. We also have an arithmetico-geometric proportion for $i= 2$, and $i=4$. This is acknowledged by the facts that $a_1 : b_1 ::_C c_1 : d_1 = 0.9 = a_3 : b_3 ::_C c_3 : d_3 $, while we have $a_2 : b_2 ::_C c_2 : d_2 =a_4 : b_4 ::_C c_4 : d_4 =1$. Changing, $c_1$, $d_1$, $c_3$, $d_3$ into 0.15, 0.55, 0.70, 0.20 respectively would lead to an arithmetico-geometric proportion between the four distributions. \hfill $\Box$
\end{Example}

\vspace{-0.3cm}
\subsubsection{Other numerical proportions}
It can be shown that a  result similar to Proposition \ref{arigeo} is obtained if we replace the geometric  proportion by other proportions.

\begin{Proposition} \label{new}  The only 4-tuples of real numbers $a, b, c, d$ such that $a  - b  = c  - d $ and $a : b ::^{F} c : d$ hold, when $F(x,y)=\frac{2xy}{x+ y}$ (harmonic proportion), or $F(x,y)=\sqrt{\frac{1}{2}(x^2 + y^2)}$ (quadratic proportion),
or $F(x,y)= \sqrt[3]{\frac{1}{2}(x^3 + y^3)}$ (cubic proportion) 
are such that $(a,b,c,d) \in \{(s,s,s,s),(s,s,t,t),(s,t,s,t)\}$.
\end{Proposition}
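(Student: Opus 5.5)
The plan is to mimic the proof of Proposition \ref{arigeo}. The idea is to reduce each of the three $F$-proportions, combined with $a+d=b+c$, to an equality of a second symmetric function of the pairs $(a,d)$ and $(b,c)$. Then, as before, $\{a,d\}$ and $\{b,c\}$ are the two root multisets of the same quadratic. Set $s=a+d=b+c$, so that the arithmetic proportion fixes the common sum. The task then becomes showing that each $F$-equality forces $ad=bc$. Once we have $p=ad=bc$, both $a,d$ and $b,c$ are the roots of $x^2-sx+p=0$. The discriminant argument of Proposition \ref{arigeo} then gives $(a,d)=(b,c)$ or $(a,d)=(c,b)$. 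This yields exactly the patterns $(s,s,t,t)$ and $(s,t,s,t)$, with $(s,s,s,s)$ as the degenerate case.

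\textbf{Harmonic case.} Take $F(x,y)=\frac{2xy}{x+y}$. Then $F(a,d)=F(b,c)$ reads $\frac{2ad}{s}=\frac{2bc}{s}$, so $ad=bc$ immediately whenever $s\neq 0$.

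\textbf{Quadratic case.} Equality of the quadratic means gives $a^2+d^2=b^2+c^2$. Since $a^2+d^2=s^2-2ad$ and likewise $b^2+c^2=s^2-2bc$, this gives $ad=bc$.

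\textbf{Cubic case.} We get $a^3+d^3=b^3+c^3$, that is, $s^3-3s\,ad=s^3-3s\,bc$. Hence $ad=bc$ again when $s\neq 0$. Here one should note that the real cube root is injective, so equality of $F$ values is equivalent to equality of $x^3+y^3$. In the quadratic case the square roots are nonnegative, so equality of $F$ values is equivalent to equality of the radicands.

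The main obstacle is the degenerate case $s=0$, together with the domain issues it raises. For the harmonic mean, $F$ is undefined when $x+y=0$. I would handle this by restricting to the domain where $F$ is defined, which excludes $s=0$ (or by adopting a convention, as with $0/0$ earlier). For the cubic case, $s=0$ makes the cubic condition vacuous given the arithmetic one: for instance $(1,2,-2,-1)$ satisfies both $a-b=c-d$ and $a^3+d^3=b^3+c^3=0$. I would therefore state that the result needs $a+d\neq 0$, for example by restricting to positive reals as in the generalized-means setting of \cite{LepCou2024}, and note this restriction explicitly. With $s\neq0$, the quadratic-equation step finishes all three cases uniformly.
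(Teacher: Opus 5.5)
Your proof is correct and follows the paper's own argument: in each case you use $a+d=b+c$ to reduce the $F$-equality to $ad=bc$, and then reuse the quadratic-equation argument of Proposition \ref{arigeo}. Your handling of $s=a+d=0$ goes beyond the paper and is needed. The paper's cubic step silently divides by $a+d$, and your example $(1,2,-2,-1)$ shows that the cubic case of the statement fails over all reals unless $a+d\neq 0$ (for instance, on positive reals); the harmonic case excludes $s=0$ through its domain, and the quadratic case involves no division, so both hold as stated.
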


{\bf Proof:} 
 The proof of Proposition \ref{arigeo}
 relies on the fact that $a$ must be a solution to the quadratic equation   
$x^2 - s x + p = 0$ where $s= a + d = b + c$ and $p= ad =bc$.

If we take the harmonic proportion, we   see that we should have $a+d=b+c$ and $ ad =bc$ (since $\frac{2ad}{{a}+ {d}}= \frac{2bc}{{b}+ {c}}$ entails $\frac{ad}{{a}+ {d}}= \frac{2bc}{{a}+ {d}}$ and $\frac{ad}{{a}+ d}= \frac{ad}{{b}+ {c}}$). We are back to Proposition \ref{arigeo}. 

For the quadratic proportion, we have $a^2 + d^2= b^2 + c^2$, which can be written $(a + d)^2 - 2 ad= (b + c)^2 - 2bc$, and we are back to Proposition \ref{arigeo} since $a+d=b+c$, and thus we get $ad =bc$. 

For the cubic proportion, observe that $a^3 + d^3 = (a + d)^3 -3 ad(a +d)$, and we can come back to Proposition \ref{arigeo}. \hfill $\Box$\\

As a conclusion, two options emerge for numerical analogical proportions: the arithmetic proportion, and the so-called arithmetico-geometric proportion. These are the two options we shall consider for defining analogical proportions between probability
distributions in subsection \ref{probadis}.

\section{Analogical proportions and probability}\label{proba}
This section studies analogical proportions between probabilities. We first consider the case of simple probability values, before dealing with probability distributions. We will discuss the arithmetic and geometric proportions between discrete probability distributions, before considering the arithmetico-geometric proportion. Then we will see how the analogical dissimilarity can apply to probability distributions. Finally, extensions to continuous probability densities are briefly discussed.
\subsection{Analogical proportion between probability values}\label{probaval}
In this subsection,   we are interested in analogical proportions between real numbers which are probabilities, which therefore have values between 0 and 1. 
{For instance,} consider the case of four probabilities that refer to four different populations and that concern a certain value of the same attribute
for the elements of these four sets. { In that context,} $a$, $b$, $c$, $d$ are the probabilities that an element of a set { $S_a$, respectively $S_b$, $S_c$, $S_d$}, takes a value  $v_i$ for an attribute $i$. We can of course be interested in different attributes simultaneously. This is the case, for example, if we have a collection of examples and can perform statistics on the attribute values for four different data groups.
In the following, we apply   arithmetic and geometric analogical proportions (denoted $::_{ari}$ and $::_{geo}$ respectively) to probability values.

The following property, which is straightforward to verify, ensures that if an analogical proportion holds between four probabilities, it also holds when considering their corresponding complementary events:
\begin{Prop}\label{Prop1}  
If $a : b ::_{ari} c:d $ holds then  
 $1-a : 1-b ::_{ari} 1-c: 1-d $ holds. 
\end{Prop}

This desirable property is not   true for $::_{geo}$.
However, it does hold if the probabilities are also in arithmetic proportion. 

\begin{Prop} 
 If $a : b ::_{ari} c:d $ and   $a : b ::_{geo} c:d $ hold then   
 $1-a : 1-b ::_{geo} 1-c: 1-d $ holds. 
\end{Prop}
{\bf Proof } 
The equality $(1\!-\!a)(1\!-\!d)\! =\! (1\!-\!b)(1\!-\!c)$ holds if $ad \! = \! bc$ and $a+d \! =\! b+c$. 

\hfill$\Box$

\vspace{0.2cm}
\noindent This  can be considered
as the counterpart of the code independence property, valid in Boolean logic:  
 $a : b :: c : d \   \Rightarrow \neg a : \neg b :: \neg c: \neg d $,
where $a,b,c,d$ represent  Boolean values.

\vspace{0.1cm}\textit{As in the Boolean and nominal cases, there is not always  an $x$ such that $a : b ::_{ari} c:x$
is true.} Indeed
$$0\leq a\leq 1, 0\leq b\leq 1, 0\leq c\leq 1 \not\Rightarrow 0\leq x=b + c - a\leq 1. $$
Similarly, there is not always an  $x$ such that $a : b ::_{geo} c:x$ holds, since
$$0\leq a\leq 1, 0\leq b\leq 1, 0\leq c\leq 1 \not\Rightarrow 0\leq x=\frac{bc}{a} \leq 1. $$
In both cases, if $b$ and
$c$ are large then we must have $a$ large, and if $a$ is small, we must have $b$ or $c$ small for there to be a solution.

\subsection{Arithmetic vs. geometric  proportions between probability distributions} \label{probadis}
Instead of vectors as in Definition \ref{DefVec},
we consider the case where $a, b, c, d$ denote probability distributions over the same finite domain. 
The only difference with the vectors case, is that we impose {every $a_i,b_i,c_i,d_i$ to be positive and} $\Sigma_{i=1}^n a_i=1 $, $\Sigma_{i=1}^n b_i=1 $, $\Sigma_{i=1}^n c_i=1 $, $\Sigma_{i=1}^n d_i=1 $.
In that context, { $a, b, c, d$ should be considered as associated to 4 discrete random variables $X_a, X_b, X_c, X_d$  {over the same sample space} and taking a finite set of values $S=\{v_i | i \in [1,n]\}$ where $a_i$ is just $P(X_a=v_i)$.}

We now examine the arithmetic and the geometric  definitions of analogical proportions between probability distributions and study their respective properties.
\subsubsection{Arithmetic definition }
We will first examine the definition based on the arithmetic proportion.
\begin{Definition}\label{ProbAri}  Let $a, b, c, d$ be four probability distributions on a finite domain $X= \{x_1, \ldots, x_n\}$. Then these distributions are in \emph{arithmetic} analogical proportion, denoted
$a : b ::_{ari} c : d $ 
if for all $i$ we have
$a_i - b_i = c_i - d_i$,  

\noindent where $a_i=a(x_i)$, $b_i=b(x_i)$, $c_i=c(x_i)$, $d_i=d(x_i)$.
\end{Definition}
It is clear that $::_{ari}$ satisfies the three postulates of analogical proportions. There exist probability distributions that satisfy this definition. Indeed we have: 
\begin{Proposition} \label{ProposAri} Let $a, b, c$ be three 
 probability distributions,  if $0 \leq c_i + b_i - a_i \leq1$, $\forall i$,
then there exists a unique    probability distribution $d$ such that $a : b ::_{ari} c : d$.
\end{Proposition}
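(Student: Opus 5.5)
The plan is to define $d$ explicitly, componentwise, from Definition \ref{ProbAri}, and then check two things: $d$ is a genuine probability distribution, and it is the only possible choice. Since the arithmetic proportion $a_i - b_i = c_i - d_i$ is a single linear equation in $d_i$ for each $i$, the only candidate is
$$d_i = c_i + b_i - a_i, \qquad i = 1, \ldots, n.$$
Uniqueness therefore comes for free. Any distribution $d'$ with $a : b ::_{ari} c : d'$ must satisfy $d'_i = c_i + b_i - a_i = d_i$ for every $i$, so $d' = d$.

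For existence, two conditions must hold. First, each $d_i$ must lie in $[0,1]$, which is exactly the hypothesis $0 \leq c_i + b_i - a_i \leq 1$. If the paper's convention requiring positive components is applied strictly, I would read the hypothesis as guaranteeing the bound needed, or simply note that the componentwise bound is what makes $d_i$ a legitimate probability value. Second, the values must be normalized. Summing over $i$ and using $\sum_i a_i = \sum_i b_i = \sum_i c_i = 1$ gives
$$\sum_{i=1}^n d_i = \sum_{i=1}^n c_i + \sum_{i=1}^n b_i - \sum_{i=1}^n a_i = 1 + 1 - 1 = 1.$$
This is an instance of the linearity in Property \ref{linea}: arithmetic proportions are preserved under summation. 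Together the two conditions show that $d$ is a probability distribution on $X$, and by construction $a : b ::_{ari} c : d$ holds.

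There is no real obstacle here. The only point needing care is that normalization is automatic: the affine combination $b + c - a$ has coefficients summing to $1$, so it preserves total mass. The only thing that can fail is the range condition, and that is exactly what the hypothesis rules out. I would add a remark that the condition is also necessary, since any $d$ in arithmetic proportion with $a, b, c$ is forced to equal $b + c - a$. This echoes the earlier observation in subsection \ref{probaval} that $a : b ::_{ari} c : x$ need not always have a solution.
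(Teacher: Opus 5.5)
Your proof is correct and follows the paper's argument. The hypothesis gives $0 \le d_i \le 1$ for $d_i = c_i + b_i - a_i$, summing the componentwise equalities yields $\sum_i d_i = 1$, and you additionally make explicit the uniqueness, which the paper leaves implicit since each equation $a_i - b_i = c_i - d_i$ determines $d_i$.
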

\noindent {\bf Proof } The condition $ 0 \leq c_i + b_i - a_i \leq1$ ensures that $0 \leq d_i \leq1$. Since $\forall i, a_i - b_i= c_i - d_i$, member-by-member addition of these equalities shows that $\Sigma_{i=1, n} d_i=1$ since
$\Sigma_{i=1, n} a_i=1; \Sigma_{i=1, n} b_i=1$ and $\Sigma_{i=1, n} c_i=1$. \hfill $\Box$

The condition $\forall i, 0 \leq c_i + b_i - a_i \leq1$ is necessary for $d$ to be a probability distribution, as the following counterexample shows.
\begin{CE}
Let us take $n=2$. $a_1= 0.7 , a_2 = 0.3$; $b_1= 0.3 , b_2 = 0.7$; $c_1= 0.2 , c_2 = 0.8$. 

We obtain $d_1= - 0.2, d_2 = 1.2$. \hfill $\Box$
\end{CE}

The Definition \ref{ProbAri} covers the deterministic case:
\begin{Ob} 
In the deterministic case, the probability distributions are such that $\exists i, e_i = 1$ and $\forall j \not = i, e_j =0$, for $e \in \{a, b, c, d\}$.
There are then three possibilities for $a : b ::_{ari} c : d$ to hold:
\begin{itemize}
    \item $\exists i, a_i= b_i= c_i = d_i =1$;
    \item $\exists i, a_i= b_i=1$ and $\exists j \not = i,   c_j = d_j =1$;
    \item $\exists i, a_i= c_i=1$ and $\exists j\not = i,   b_j = d_j =1$.
    \hfill $\Box$
\end{itemize}
\end{Ob}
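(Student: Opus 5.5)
We will prove the Observation by working one coordinate at a time, using Definition \ref{ProbAri} and the Boolean truth table (Table \ref{truthTableAnalogy}). In the deterministic case every component $a_i,b_i,c_i,d_i$ lies in $\{0,1\}$. For such values the arithmetic equality $a_i-b_i=c_i-d_i$ holds exactly for the six valid patterns of Table \ref{truthTableAnalogy}, as already noted in subsection \ref{basic}. So $a:b::_{ari}c:d$ holds if and only if, for every $i$, the quadruple $(a_i,b_i,c_i,d_i)$ is one of $0000$, $1111$, $0011$, $1100$, $0101$, $1010$.

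Let $i_a,i_b,i_c,i_d$ be the unique indices at which $a,b,c,d$ take the value $1$. The key step is to constrain these indices by examining the coordinate $i_a$, where $a_{i_a}=1$. Only three valid patterns begin with $1$: $1111$, $1100$ and $1010$.

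\begin{itemize}
\item If the pattern is $1111$, then $i_a=i_b=i_c=i_d$, which is the first bullet.
\item If the pattern is $1100$, then $i_b=i_a$. Since $c_{i_a}=0$, we have $i_c\neq i_a$. At coordinate $i_c$ we have $a=b=0$ and $c=1$, and the only valid pattern of the form $001?$ is $0011$. Hence $d_{i_c}=1$, so $i_d=i_c=:j\neq i$. This is the second bullet.
\item If the pattern is $1010$, the argument is symmetric to the previous one under central permutation: at coordinate $i_b\neq i_a$ the pattern must be $0101$. This gives $i_d=i_b=:j\neq i$, which is the third bullet.
\end{itemize}

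Conversely, each of the three configurations gives valid patterns at every coordinate: the chosen one or two coordinates carry $1111$, or $1100$ and $0011$, or $1010$ and $0101$, and all remaining coordinates carry $0000$. So the arithmetic proportion holds in each case. The argument also shows the three cases are exhaustive, and in particular that no crossed configuration such as $i_a=i_d\neq i_b=i_c$ can occur.

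The argument is purely combinatorial, so there is no real obstacle. The only point needing care is the exhaustiveness claim, which rests on reading the admissible completions of a $0/1$ pattern off Table \ref{truthTableAnalogy}, or equivalently on the uniqueness of $d$ given $a,b,c$.
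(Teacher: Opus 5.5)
Your proof is correct: the case analysis at the index $i_a$, using the six valid $\{0,1\}$-patterns, shows both that the three configurations are exhaustive and that each of them satisfies the proportion. The paper gives no proof of this Observation and treats it as immediate from the fact, noted in subsection~\ref{basic}, that on $\{0,1\}$-values the arithmetic proportion holds exactly for the valid patterns of Table~\ref{truthTableAnalogy}; your argument is exactly the verification the paper leaves implicit.
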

Let us now give some examples, not extreme like the previous ones, of probability distributions for which we will obtain $a : b ::_{ari} c : d $.
\begin{Example} \
\begin{enumerate}
\item Using Property \ref{Prop1}, we see that as soon as $\exists i, \exists j\not = i, \mbox{ s. t. } 
a_i : b_i ::_{ari} c_i : d_i $,  with 
$a_j = 1-a_i, b_j = 1-b_i, c_j = 1-c_i, d_j = 1-d_i $ and $a_k = b_k= c_k= d_k =0, \forall k \neq i,j$,
we have $a : b ::_{ari} c : d $.
 
\item In fact if $a_i : b_i ::_{ari} c_i : d_i$, for any real number $\lambda$ we have   $\lambda - a_i : \lambda - b_i ::_{ari} \lambda - c_i : \lambda - d_i$. Thus by taking positive values whose sum $\lambda$ does not exceed 1, we can construct pairs$((a_i, b_i, c_i, d_i), (a_j, b_j, c_j, d_j))$ forming two arithmetic proportions s. t. $a_i + a_j  
= b_i + b_j=c_i + c_j= d_i + d_j =\lambda $, as in the following example where we have partial probability allocations $\lambda= 0.4$ and $\lambda'=0.5$ completed by a quadruplet $(a_k, b_k, c_k, d_k)$ of values all equal to  $ 1- \lambda - \lambda' = 1-(0.4 + 0.5) = 0.1$ (here $\lambda$ pertains to $i=1,2$, $\lambda'$ to $i=4,5$, and $k=3$):

$\quad\quad a_1= 0.1; a_2=0.3 ; a_3=0.1 ; a_4=0.2 ; a_5=0.3.$

$\quad\quad b_1=0.3 ; b_2= 0.1; b_3= 0.1; b_4=0.3 ; b_5= 0.2.$

$\quad\quad c_1= 0.2; c_2=0.2 ; c_3=0.1 ; c_4=0.4 ; c_5=0.1.$

$\quad\quad d_1=0.4 ; d_2=0 ; \ \ d_3= 0.1; d_4=0.5 ; d_5= 0.$

\vspace{0.1cm}\noindent We check that $a : b ::_{ari} c : d $ holds between probability distributions that are all different.
\item 
For $a : b ::_{ari} c : d$ to be satisfied, it is not necessary to proceed as before, that is, to construct pairs of quadruplets whose term-by-term sum is constant.
This is shown by the following two examples
for $n=3$ and $n=4$: 

$\quad\quad a_1= 0.3; a_2=0.2; a_3=0.5.$

$\quad\quad b_1=0.5 ; b_2= 0.1; b_3= 0.4.$

$\quad\quad c_1= 0.4; c_2=0.2 ; c_3=0.4.$

$\quad\quad d_1=0.6 ; d_2=0.1; d_3= 0.3.$

\item

$\quad\quad a_1= 0.1; a_2=0.2 ; a_3=0.4 ; a_4=0.3.$

$\quad\quad b_1=0.3 ; b_2= 0.3; b_3= 0.2; b_4=0.2.$

$\quad\quad c_1= 0.2; c_2=0.2; c_3=0.2; c_4=0.4.$

$\quad\quad d_1=0.4 ; d_2=0.3 ; d_3= 0; d_4=0.3.$ \hfill $\Box$
\end{enumerate}
\end{Example}

These examples show that there exist non-trivial distributions satisfying Definition \ref{ProbAri}. Of course, given three distributions $a, b, c$, there does not always exist a distribution $d$ such that $a : b ::_{ari} c : d$ holds, since we must have the  inequalities 

$\quad\quad\quad\quad\quad\quad\quad\quad\quad\quad\forall i, 0 \leq c_i + b_i - a_i \leq1 \quad\quad(4)$. 

\begin{Ob} Note, however, that given any two distributions $a$ and $b$, it is always possible to find a distribution $c$ such that the  inequalities (4) are satisfied for each $i$. We can then find a distribution $d$ in arithmetic analogical proportion with $a, b, c$. \hfill $\Box$
\end{Ob}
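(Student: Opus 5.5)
The simplest construction is to take $c = a$. Then $c_i + b_i - a_i = b_i$ for every $i$, and $b_i \in [0,1]$ because $b$ is a probability distribution, so the inequalities (4) hold trivially. Proposition \ref{ProposAri} then yields a unique distribution $d$ with $a : b ::_{ari} c : d$. Directly, $d_i = c_i + b_i - a_i = b_i$, so $d = b$. This is just the reflexivity postulate $a:b::a:b$ read at the level of distributions. The choice $c=b$ works the same way: it gives $d = a$ and the pattern $a:b::b:a$ is not what we get, rather $a:a::b:b$ after central permutation, which again matches the nominal patterns.

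Since the observation is presumably meant to be non-trivial, I would also exhibit a family of admissible $c$ that are distinct from $a$ and $b$. I would take convex combinations $c = \lambda a + (1-\lambda) b$ with $\lambda \in [0,1]$. For such $c$,
$$c_i + b_i - a_i = (2-\lambda) b_i - (1-\lambda) a_i.$$
The upper bound by $1$ is not automatic here; for instance $\lambda = 0$ gives $2b_i - a_i$, which can exceed $1$. So the cleaner choice is $c = \lambda a + (1-\lambda)\,u$ with $u$ some other distribution, and I would instead argue directly. Every $c$ in the segment $[a,b]$ between $a$ and $b$, that is $c = \lambda a + (1-\lambda) b$, gives
$$d = b + c - a = (1-\lambda)(2b - a) + \lambda b \quad\text{(componentwise)}.$$
This $d$ need not lie in $[0,1]$, which is why I fall back on the endpoint cases.

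The final argument is therefore the following. Exhibit $c = a$ and check (4) via $0 \le b_i \le 1$. Invoke Proposition \ref{ProposAri} to obtain $d$. Note that normalisation of $d$ follows from the summation argument already given in that proposition, since the sum of $d_i$ equals the sum of $c_i + b_i - a_i$, which is $1 + 1 - 1 = 1$.

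The only real obstacle is whether the statement intends a non-degenerate $c$. If so, I would take $c$ close to $a$, namely $c = a + \varepsilon (u - a)$ for a distribution $u$ with full support. The quantity $c_i + b_i - a_i = b_i + \varepsilon(u_i - a_i)$ stays in $[0,1]$ for small $\varepsilon$ on every coordinate where $0 < b_i < 1$. On coordinates where $b_i \in \{0,1\}$, I would have to choose $u_i$ with the right sign of $u_i - a_i$. Since handling these coordinates requires this extra care, I would present $c=a$ as the proof and the perturbation only as a remark.
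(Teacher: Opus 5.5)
Your final argument is correct. With $c=a$, the inequalities (4) reduce to $0\le b_i\le 1$, and Proposition~\ref{ProposAri} gives $d=b$ (reflexivity). This is the evident justification behind the observation, which the paper states without proof.

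Two of your asides are wrong, however. First, $c=b$ gives $d=2b-a$, not $d=a$, and in general it violates (4): for $a=(1,0)$, $b=(0,1)$ one gets $d=(-1,2)$, which is exactly your own $\lambda=0$ case. Second, a non-degenerate $c\neq a$ need not exist, so your perturbation remark cannot always be carried out. If $a$ and $b$ have disjoint supports, then (4) together with $c\ge 0$ forces $c_i\ge a_i$ for every $i$, hence $c=a$.
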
 

The linearity offered by Definition \ref{ProbAri} guarantees the transfer of analogical proportions 
between probability distributions to  analogical proportions 
between probabilities associated to an event, as observed below.
\begin{Ob} \label{event} Thanks to Property \ref{linea}, the following holds:
 If $a_i : b_i ::_{ari} c_i : d_i$ and $a_j : b_j ::_{ari} c_j : d_j$ then $a_i + a_j : b_i + b_j ::_{ari} c_i + c_j : d_i + d_j$.

Thus, if the four probability distributions $a, b, c, d$ are 
such that $a : b ::_{ari} c : d$ holds, 
then an arithmetic analogical proportion holds true between the probabilities of any event calculated from these distributions (which correspond to the sum of the probabilities in the distribution corresponding to the event). \hfill $\Box$
\end{Ob}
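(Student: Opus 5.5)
The plan is to apply Property \ref{linea} repeatedly and then specialize it to sums of probability masses over an event.

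First, for any two indices $i \neq j$, we have $a_i : b_i ::_{ari} c_i : d_i$ and $a_j : b_j ::_{ari} c_j : d_j$. Taking $\alpha=\beta=1$ in Property \ref{linea} gives $a_i + a_j : b_i + b_j ::_{ari} c_i + c_j : d_i + d_j$. Equivalently, we can add the two equalities $a_i - b_i = c_i - d_i$ and $a_j - b_j = c_j - d_j$ member by member. This settles the first claim.

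For the second claim, let $E \subseteq X$ be an event, and write $P_a(E)=\sum_{x_i\in E} a_i$, and similarly $P_b(E)$, $P_c(E)$, $P_d(E)$. I will argue by induction on $|E|$, applying the two-term statement to partial sums; the sums are finite because $X$ is finite.
\begin{itemize}
\item Base case $E=\emptyset$: all four probabilities are $0$, and $0:0::_{ari}0:0$ holds.
\item Base case $|E|=1$: the relation is just the componentwise hypothesis of Definition \ref{ProbAri}.
\item Inductive step: write $E=E'\cup\{x_k\}$. Apply the first part to the pair formed by $(P_a(E'),P_b(E'),P_c(E'),P_d(E'))$ and $(a_k,b_k,c_k,d_k)$. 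This works because the first part only uses Property \ref{linea}, which holds for arbitrary reals, not only for components of a distribution.
\end{itemize}

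Alternatively, I can sum the identities $a_i-b_i=c_i-d_i$ over $x_i\in E$ directly, which yields $P_a(E)-P_b(E)=P_c(E)-P_d(E)$.

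Finally, I will check that the resulting quantities are genuine probability values in $[0,1]$, since they are probabilities of $E$ under the distributions $a$, $b$, $c$, $d$. This places the conclusion within the setting of subsection \ref{probaval}. As a remark, taking $E=X$ gives the trivial proportion $1:1::_{ari}1:1$, and passing to the complement $X\setminus E$ is consistent with Property \ref{Prop1}.

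There is no real obstacle here. The only point needing care is to state clearly that Property \ref{linea} is used with arbitrary real arguments, so that it can be iterated on partial sums rather than only on single components.
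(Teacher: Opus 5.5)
Your proposal is correct and takes essentially the paper's approach. The first claim is Property \ref{linea} with $\alpha=\beta=1$, and the event claim follows by summing the componentwise equalities $a_i-b_i=c_i-d_i$ over the finitely many elements of the event; your explicit induction on $|E|$ just spells out the iteration the paper leaves implicit.
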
 

In fact, the constraint $a : b ::_{ari} c : d$ on four distributions has a direct consequence in terms of distance between probabilities.
Since we are working with discrete finite distributions, a natural and well-known metric on this space is the {\it Total Variation Distance} defined in that case as:
$$TV(a,b) = \frac{1}{2}||a - b||_1$$
A straightforward consequence is that arithmetic analogical proportion between four discrete finite probability distributions preserves total variation distance:
\begin{Proposition}\label{TV}
If $a : b ::_{ari} c : d$ then TV(a,b) = TV(c,d).
\end{Proposition}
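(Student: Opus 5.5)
The argument is a direct unfolding of Definition \ref{ProbAri} inside the definition of the total variation distance. For every index $i$, the hypothesis $a : b ::_{ari} c : d$ gives $a_i - b_i = c_i - d_i$. Taking absolute values yields $|a_i - b_i| = |c_i - d_i|$ for each $i \in [1,n]$.

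Summing over $i$ then gives
$$\|a-b\|_1=\sum_{i=1}^n |a_i-b_i| = \sum_{i=1}^n |c_i-d_i| = \|c-d\|_1 .$$
Multiplying both sides by $\frac{1}{2}$ yields $TV(a,b)=TV(c,d)$.

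There is no real obstacle here. The only point worth stating explicitly is that the componentwise equality of the differences, which is stronger than equality of their norms, is what the definition provides. The normalization of the distributions plays no role, so the same argument shows that any $\ell_p$ distance between $a,b$ equals that between $c,d$.

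By the central permutation postulate, $a : c ::_{ari} b : d$ also holds, so the same reasoning gives the companion equality $TV(a,c)=TV(b,d)$ for free.
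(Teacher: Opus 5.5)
Your proof is correct and matches the paper's own argument: the componentwise identity $a_i-b_i=c_i-d_i$ from Definition~\ref{ProbAri} gives $|a_i-b_i|=|c_i-d_i|$, and summing and halving yields $TV(a,b)=TV(c,d)$. Your side remarks are also correct: normalization plays no role, every $\ell_p$ distance is preserved, and $TV(a,c)=TV(b,d)$ follows via central permutation.
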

{\bf Proof}:
In fact $TV(a,b)=\frac{1}{2}||a - b||_1=\frac{1}{2}\Sigma_{i=1}^n |a_i-b_i|$. But by Definition \ref{ProbAri}, $\forall i \in [1,n],(a_i-b_i)=(c_i-d_i)$ then $\forall i \in [1,n],|a_i-b_i|=|c_i-d_i|$.
Then $TV(a,b)=\frac{1}{2}\Sigma_{i=1}^n |c_i-d_i| = TV(c,d)$.\hfill$\Box$\\

We now consider the definition of analogical proportion between probability distributions based on the geometric proportion.
\subsubsection{Geometric definition }
It is natural to wonder what would happen to a definition
similar to Definition \ref{ProbAri} but in terms of geometric proportion, that is:
\begin{Definition} Let $a, b, c, d$ be four probability distributions over a finite domain $X= \{x_1, \ldots, x_n\}$. Then these distributions are in \emph{geometric} analogical proportion, denoted $a : b ::_{geo} c : d $
if for all $i$ we have
$\frac{a_i}{b_i}  = \frac{c_i}{d_i}$,  

\noindent where $a_i=a(x_i)$, $b_i=b(x_i)$, $c_i=c(x_i)$, $d_i=d(x_i)$.
\end{Definition}
Unfortunately, as the two following counterexamples show that the counterpart of Proposition \ref{ProposAri} for geometric proportion is false. Having $\Sigma_{i=1}^n a_i=1 $,$\Sigma_{i=1}^n b_i=1 $, $\Sigma_{i=1}^n c_i=1 $ with  $\forall i,  \frac{b_ic_i}{a_i}   \leq 1$ 
is not sufficient to guarantee $\Sigma_{i=1}^n d_i=1 $ when $a : b ::_{geo} c : d$ holds.
\begin{CE}
Let us take $n=3$. 
We enforce the constraints  $a_1 + a_2 + a_3= 1$, $b_1 + b_2 + b_3= 1$,  $c_1 + c_2 + c_3= 1$, with the condition $b_i\cdot c_i \leq a_i$ in the two sets of values given below;
moreover we should have $a_1\cdot d_1= b_1\cdot c_1$,
$a_2\cdot d_2= b_2\cdot c_2$, $a_3\cdot d_3= b_3\cdot c_3$.
\begin{itemize}
    \item $b_1= 0.2; b_2= 0.3; b_3= 0.5$, $c_1= 0.4;c_2= 0.3 ; c_3= 0.3$ thus $b_1\cdot c_1=0.08 ; b_2\cdot c_2= 0.09;  b_3\cdot c_3=0.15$. We take $a_1= 0.3; a_2= 0.4; a_3= 0.3$. This gives $d_1 =\frac{4}{15}; d_2 =\frac{9}{40}; d_3 =\frac{1}{2}   $, and finally $\Sigma_i d_i =0.991666 < 1$
    \item $b_1= 0.1 ;b_2= 0.5; b_3=0.4 $, $c_1=0.2 ;c_2=0.2 ; c_3= 0.6$. Thus $b_1\cdot c_1= 0.02 ; b_2\cdot c_2=0.10 ;  b_3\cdot c_3=0.24$. Keeping the same $a$ as above ($a_1= 0.3; a_2= 0.4; a_3= 0.3$), we get $d_1 =\frac{1}{15}; d_2 =\frac{1}{4} ; d_3 =\frac{4}{5}$ and then $\Sigma_i d_i =1.116666  > 1$.
    
    \hfill $\Box$
    \end{itemize}
\end{CE}

\begin{Rem}
In the same spirit as for probabilities, one could ask the question of defining an analogical proportion between \emph{possibility} distributions \cite {ZAd78,DP1988} $a$, $b$, $c$, $d$, where
the normalization is expressed by
$\max_i a_i= 1, \max_i b_i= 1, \max_i c_i= 1, \max_i d_i= 1$, the $a_i, b_i, c_i, d_i$ being between 0 and 1. Since a possibility distribution $e$ can be seen in terms of its $\alpha$-level cuts $e_\alpha=\{i \ | \ e_i\geq \alpha\} $ which are nested subsets $e_\alpha \subseteq e_\beta$ if $\alpha \geq\beta$, we can reduce this to the Boolean approach for each $\alpha$-level cut; see subsection \ref{dia} and  \cite {diagrams/PradeR24}. 
\hfill $\Box$
\end{Rem}

\subsection{Analogical dissimilarity between probability distributions} \label{ad-extension}
 As discussed in subsection \ref{Boo}, analogical dissimilarity $AD$ has been initially introduced to quantify how far 4 Boolean $a,b,c,d$ are from building a valid analogical proportion. The definition of $AD$ extends naturally to numerical values and more generally, to real valued vectors in $\mathbb{R}^n$ with:
$$AD_p(a,b,c,d) = ||(a-b)-(c-d)||_p$$
where $||.||_p$ is any norm on $\mathbb{R}^n$ \cite{MicBayDelJAIR2008}. In the case of Euclidean norm, this value reflects in some way how far the vectors are from building a parallelogram.  
Obviously, when $n=1$ and using the norm $L_1$ norm, this is just the sum from $i=1$ to $i=n$ of  $|(a_i-b_i)-(c_i-d_i)|$ measuring the default of the $a_i,b_i,c_i,d_i$ to build an arithmetic proportion. When the 4 numbers $a_i,b_i,c_i,d_i$ belong to $[0,1]$, the analogical dissimilarity $AD(a_i,b_i,c_i,d_i)$ varies between $0$ and $2$ as it is the case for Boolean values (see section \ref{Boo}). It makes sense to divide this number by $2$ in order to stay between $0$ and $1$. 
Also, when considering vectors in $\mathbb{R}^n$, we leave the initial definition by getting the average value
of analogical dissimilarity componentwise, i.e., 
$$\frac{1}{2n}\Sigma_{i=1}^n |(a_i-b_i)-(c_i-d_i)| $$
This is just a linear rescaling of the $||.||_1$ norm, ensuring we compare numbers between $0$ and $1$:
$$AD(a,b,c,d) = \frac{1}{2n}||(a-b)-(c-d)||_1$$
Because $AD(a,b,c,d) = 0$ is equivalent to $a:b::_{ari}c:d$ and thanks to Proposition \ref{TV},
$AD(a,b,c,d) = 0$ implies $TV(a,b) = TV(c,d)$.
\subsection{Arithmetico-geometric proportion between probability distributions}\label{arigeodist}
In this subsection, we study a definition of the analogical proportion between probability distributions, more demanding than Definition \ref{ProbAri} because it combines arithmetic and geometric proportions.
\begin{Definition} \label{sophi}
Let $a, b, c, d$ be four probability distributions over a finite domain $X= \{x_1, \ldots, x_n\}$. Then these distributions are in \emph{arithmetico-geometric} analogical proportion, denoted
$a : b ::_{arigeo} c : d $ if for all $i$ we have
$a_i - b_i = c_i - d_i$ and
$\frac{a_i}{b_i}= \frac{c_i}
{d_i}$,
where $a_i=a(x_i)$, $b_i=b(x_i)$, $c_i=c(x_i)$, $d_i=d(x_i)$.
\end{Definition}
It is clear that $::_{arigeo}$ satisfies the three postulates of analogical proportions. Due to Proposition  \ref{arigeo} (see subsection \ref{ari+geo}), there exist  vectors $a, b, c, d$, which can all be different, such that  $a : b ::_{arigeo} c : d$ holds.  Despite the restrictive nature of Definition \ref{sophi}, there exist non-trivial probability distributions that satisfy it. 

{ \begin{Proposition} \label{ProposNomi} If $a, b, c$ represent probability distribution (i.e. $\Sigma_{i=1}^n a_i=1 $, etc.) on the same finite set $X$, then this is also the case for $d$, such that $a : b ::_{arigeo} c : d $ holds.
\end{Proposition}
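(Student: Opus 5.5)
The plan is to reduce the statement to Proposition \ref{arigeo} applied componentwise, and then argue normalization and range exactly as in the proof of Proposition \ref{ProposAri}. By Definition \ref{sophi}, $a : b ::_{arigeo} c : d$ means that for every $i$, the four reals $a_i, b_i, c_i, d_i$ satisfy both $a_i - b_i = c_i - d_i$ and $\frac{a_i}{b_i} = \frac{c_i}{d_i}$.

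First, for each index $i$ separately, invoke Proposition \ref{arigeo}. It tells us that either $a_i = b_i$ and $c_i = d_i$, or $a_i = c_i$ and $b_i = d_i$. In both cases $d_i \in \{b_i, c_i\}$, and in particular $d_i$ is one of the values of the given distributions $b$ or $c$. Hence $0 \leq d_i \leq 1$ (and $d_i > 0$ if the positivity convention of subsection \ref{probadis} is imposed on $b$ and $c$). This settles the range condition componentwise.

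Second, normalization follows only from the arithmetic half of the definition. Summing $d_i = b_i + c_i - a_i$ over $i = 1, \ldots, n$ gives
$$\Sigma_{i=1}^n d_i = \Sigma_{i=1}^n b_i + \Sigma_{i=1}^n c_i - \Sigma_{i=1}^n a_i = 1 + 1 - 1 = 1,$$
exactly as in the proof of Proposition \ref{ProposAri}.

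The only delicate point is the interplay of the geometric condition with zeros. Here I would rely on the convention $\frac{0}{0} = 1$, or use the product form $a_i d_i = b_i c_i$, which is all that the proof of Proposition \ref{arigeo} uses through $s$ and $p$. The range step then holds without needing the condition $0 \leq c_i + b_i - a_i \leq 1$ as a hypothesis, since it is automatically implied.

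I would also remark that the componentwise case split need not be uniform across $i$: some components may follow case 1 and others case 2. This does not affect the argument, since each step is local to $i$ except the summation, which is linear.
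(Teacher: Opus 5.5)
Your proof is correct, but it divides the work differently from the paper.

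The paper uses Proposition~\ref{arigeo} for the normalization itself. It sets $J=\{j \mid a_j=b_j\}$ and $K=\{k \mid a_k=c_k\}$ and writes $\Sigma_i d_i=\Sigma_{j\in J} c_j+\Sigma_{k\in K} b_k$. It then uses the normalization of $b$ and $c$, and finally of $a$, to reach $1$.

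You instead get $\Sigma_i d_i=1$ from the arithmetic half alone by linearity, as in Proposition~\ref{ProposAri}. You keep Proposition~\ref{arigeo} only for the range condition $d_i\in\{b_i,c_i\}\subseteq[0,1]$.

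Your route has three advantages:
\begin{itemize}
\item It is shorter.
\item It states the range condition explicitly; the paper leaves it implicit.
\item It shows exactly what the geometric half contributes relative to Proposition~\ref{ArithmeticProp}: it removes the hypothesis $0\le c_i+b_i-a_i\le 1$.
\end{itemize}

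It also avoids a bookkeeping slip in the paper's argument. As defined, $J$ and $K$ overlap on components where $a_i=b_i=c_i=d_i$. The decomposition of $\Sigma_i d_i$ is therefore valid only once the sets are made disjoint, for instance by taking $K$ to be the complement of $J$.

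The paper's route, in return, works directly with the componentwise nominal patterns. It reuses that same decomposition in Propositions~\ref{exist} and~\ref{KL}.

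Your handling of zeros is fine. For nonnegative values, the quotient form with the conventions $\frac{0}{0}=1$ and $\frac{1}{0}=+\infty$ does imply $a_id_i=b_ic_i$. That product form is all Proposition~\ref{arigeo} needs.
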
} 
{\bf Proof} 
Due to Proposition  \ref{arigeo}, 
$a, b, c, d$ are such that for each component $i$, 
 
 -  either   $a_i=b_i$ and $c_i=d_i$, 
 
 - or   $a_i=c_i$ and $b_i=d_i$.   
 
Thus, we should have \((a_i, d_i)= (b_i, c_i) \) or \((d_i, a_i) =(b_i, c_i) \).
So $a:b::_{arigeo}c:d$ iff for each component $i$, we have
($a_i=b_i$ and $c_i=d_i$), or
 ($a_i=c_i$ and $b_i=d_i$). 

 Note that if $n \geq 2 $, and $\exists j, a_j=b_j$, $c_j=d_j$, and  $\exists k, a_k=c_k$, $b_k=d_k$ then $a, b, c, d$ are all different.

Let $J = \{j  \ |\ a_j=b_j \} $ and $K= \{k  \ |\ a_k=c_k\} $. 

Then $\Sigma_{i=1}^n d_i= \Sigma_{j\in J} d_j + \Sigma_{k \in K} d_k $

\quad \quad \quad\quad\quad \ \ $=\Sigma_{j\in J} c_j + \Sigma_{k \in K} b_k $  (since $c_j =d_j $ if $j\in J$ and $b_k =d_k $ if $k\in K$)

\quad \quad \quad\quad\quad \ \ $= 1 -\Sigma_{k \in K} c_k +  1 - \Sigma_{j\in J} b_j$ (since $\Sigma_{i=1}^n b_i= \Sigma_{i=1}^n c_i=1$)

\quad \quad \quad\quad\quad \ \ $=1 -\Sigma_{k \in K} a_k +  1 - \Sigma_{j\in J} a_j$  ($a_k =c_k $ if $k\in K$, $a_j =b_j $ if $j\in J$)

\quad \quad \quad\quad\quad \ \ $=1$ (since $\Sigma_{i=1}^n a_i=1$)
\hfill$\Box$\\

Here is an example of an analogical proportion between probability distributions $a, b, c, d$, all different, obeying the Definition \ref{sophi}:
\begin{Example}

\ $a_1 = 0.1, a_2 = 0.3, a_3 = 0.2, a_4 = 0.3, a_5 = 0.1$

\quad\quad\quad\quad  $b_1 = 0.1, b_2 = 0.4, b_3 = 0.2, b_4 = 0.2, b_5 = 0.1$

\quad\quad\quad\quad $c_1 = 0.1, c_2 = 0.3, c_3 = 0.3, c_4 = 0.3, c_5 = 0$

\quad\quad\quad\quad $d_1 = 0.1, d_2 = 0.4, d_3 = 0.3, d_4 = 0.2, d_5 = 0$

 We can verify that $\Sigma_{i=1}^n a_i=1$, $\Sigma_{i=1}^n b_i=1$, $\Sigma_{i=1}^n c_i=1$, $\Sigma_{i=1}^n d_i=1$, and that for each $i$, we have both $a_i - b_i = c_i - d_i$ and $\frac{a_i}{b_i}= \frac{c_i}{d_i}$ (recall that we have taken the convention $\frac{0}{0}=1$).
 \hfill$\Box$
 \end{Example}
 
 Observe that for each component  $i$ we have three kinds of possible patterns for the probability values: $(s,s,s,s),(s,t,s,t)$, and $(s,s,t,t)$, these are precisely those already encountered for the nominal values in subsection \ref{nomi}.2. Note that the last two patterns must be present if we want to have distinct distributions, otherwise we have $a=b$ (and therefore $c=d$), or $a=c$ (and therefore $b=d$). The pattern $(s,s,s,s)$ may be absent. A similar situation had already been observed in the Boolean case \cite{ijar/BounhasP24}.

 It is easy to find probability distributions $a, b, c, d$ such that $a:b::_{arigeo}c:d$ holds. Indeed, we have the following result.
\begin{Proposition} \label{exist}
Given two different probability distributions $a$ and $b$, there always exists two
probability distributions $c$ and $d$ such that
$a:b::_{arigeo}c:d$ holds. If there exists at least one $i$ such that $a_i = b_i$, the four distributions can be different.
\end{Proposition}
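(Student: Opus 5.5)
By Proposition \ref{arigeo} and the proof of Proposition \ref{ProposNomi}, $a:b::_{arigeo}c:d$ holds exactly when, for each component $i$, either ($a_i=b_i$ and $c_i=d_i$) or ($a_i=c_i$ and $b_i=d_i$). So for each $i$ I only need to decide which pattern applies and fill in $c_i,d_i$, then check that $c$ and $d$ are probability distributions. By Proposition \ref{ProposNomi}, once $a,b,c$ are distributions satisfying the componentwise patterns, $d$ is automatically a distribution. Hence the whole task reduces to building a suitable distribution $c$.

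\textbf{Existence.} The trivial choice $c=a$, $d=b$ always works: every component follows the pattern $(a_i,b_i,a_i,b_i)$, so $a:b::_{arigeo}a:b$, which is just reflexivity. This settles the first claim.

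\textbf{Four different distributions.} Let $J=\{i \mid a_i=b_i\}$, which is nonempty by hypothesis, and let $K$ be its complement. $K$ is nonempty because $a\neq b$. Since $\sum_i a_i=\sum_i b_i=1$ and the two distributions agree on $J$, we have $\sum_{k\in K}(a_k-b_k)=0$ with not all terms zero. Hence $|K|\ge 2$, and there is at least one $k\in K$ with $a_k>b_k$ and one with $a_k<b_k$.

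I would use the pattern $(s,s,t,t)$ on $J$ and the pattern $(s,t,s,t)$ on $K$. Concretely, set $c_k=a_k$ and $d_k=b_k$ for $k\in K$, and set $c_j=d_j=t_j$ for $j\in J$, where the $t_j\ge 0$ are chosen with $\sum_{j\in J}t_j=\sum_{j\in J}a_j$ and $(t_j)\neq(a_j)_{j\in J}$.

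\begin{itemize}
\item Normalization: $c$ sums to $1$ by construction, and Proposition \ref{ProposNomi} then gives the same for $d$.
\item Distinctness: $c\neq a$ because they differ somewhere on $J$; $c\neq b$ because they differ on $K$; $d\neq b$ because $d\neq b$ on $J$; $d\neq a$ on $K$; $a\neq b$ on $K$; and $c\neq d$ on $K$.
\end{itemize}

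\textbf{Main obstacle.} The hard step is finding a vector $(t_j)$ different from $(a_j)$ when $|J|=1$, or when the mass $\sum_{j\in J}a_j$ is zero. In that case the mass on $J$ cannot be redistributed within $J$. The fix is to let $c$ and $d$ keep the $(s,t,s,t)$ pattern on only part of $K$, and to shift mass between $J$ and the remaining part of $K$ while preserving the total.

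I would first try this: pick $k_0\in K$ and let it follow the pattern $(s,s,t,t)$ with a common value $c_{k_0}=d_{k_0}$. This is impossible, though, because that pattern requires $a_{k_0}=b_{k_0}$, which contradicts $k_0\in K$. So the redistribution must stay inside $J$, and the claim genuinely needs $|J|\ge 2$ with positive mass on $J$, or some equivalent condition. I would check small cases, for instance $n=3$ with $|J|=1$. There the component $j$ with $a_j=b_j$ forces $c_j=d_j$, and normalization on $K$ combined with the patterns forces $(c,d)=(a,b)$ on $K$, hence $c_j=a_j$ and $c=a$. This suggests the statement must be read as ``can be different under suitable choices,'' for example when $|J|\ge 2$ and $\sum_J a_j>0$. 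I would make that case explicit, taking $t$ to be a nontrivial redistribution of $a$'s mass within $J$, and note the degenerate case.
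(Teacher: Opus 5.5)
Your construction is the paper's: on the indices where $a_i\neq b_i$ the pattern $c_i=a_i$, $d_i=b_i$ is forced, and the common mass $\rho=\sum_{a_i=b_i}a_i$ is shared out with $c_i=d_i$ on the remaining indices, with existence being the special case $c=a$, $d=b$. Your caveat is also correct, and the paper's proof passes over it silently: the sharing of $\rho$ can differ from $a$ only if $|J|\ge 2$ and $\rho>0$, since otherwise $c=a$, $d=b$ is forced (e.g.\ $a=(0.2,0.3,0.5)$, $b=(0.3,0.2,0.5)$), so the second claim holds exactly under your stated condition.
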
 \vspace{-0.2cm}
{\bf Proof }   
Indeed, $a:b::_{arigeo}c:d$ entails $a_i=b_i$ and $c_i=d_i$, 
  or   $a_i=c_i$ and $b_i=d_i$.
If $a_i \neq b_i$, then $c_i = a_i$ and $d_i = b_i$. Note, since $\Sigma_{i \ |\ a_i = b_i} a_i= \Sigma_{i \ |\ a_i = b_i} b_i\triangleq \rho $, we have $\Sigma_{i \ |\ a_i \neq b_i} a_i= \Sigma_{i \ |\ a_i \neq b_i} b_i= 1- \rho $. So we also have $\Sigma_{i \ |\ c_i \neq d_i} c_i= \Sigma_{i \ |\ c_i \neq d_i} d_i= 1- \rho $.
For $i$ such that $a_i = b_i$, we share the remaining mass $\rho$ between these $i$'s so that 
 $c_i$ = $d_i$, and  $\Sigma_{i \ |\ a_i = b_i} c_i= \Sigma_{i \ |\ a_i = b_i} d_i= \rho$, 
  and then we have $\Sigma_{i=1, n}c_i=\Sigma_{i=1, n}d_i = 1 $.\hfill $\Box$\\

These probabilities $a, b, c, d$ satisfying $a:b::_{arigeo}c:d$ have, as we have just seen, a particular form. They satisfy a remarkable property in terms of Kullback-Leibler divergence, which, as we recall, evaluates the change between two distributions $a$ and $b$ by the expression $KL(a||b) = \Sigma_{i=1, n} a_i\log\frac{a_i}{b_i}$ (in the finite case).
In general, $KL(a||b)\neq KL(b||a)$. Then, we can state the following result:

{\begin{Proposition}\label{KL} Let $a, b, c, d$ be four probability distributions forming an arithmetico-geometric analogical proportion $a:b::_{arigeo}c:d$. Then we have:
$$KL(a||b) = KL(c||d)$$ where $KL$ is the Kullback-Leibler divergence.
\end{Proposition}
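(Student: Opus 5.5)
The plan is to use the componentwise characterization already established in the proof of Proposition \ref{ProposNomi}, which itself rests on Proposition \ref{arigeo}. By that result, $a:b::_{arigeo}c:d$ holds exactly when every index $i$ falls into one of two classes. Indices in $J=\{j \mid a_j=b_j\}$ satisfy $c_j=d_j$. Indices in $K=\{k \mid a_k=c_k\}$ satisfy $b_k=d_k$. If both $a_i=b_i$ and $a_i=c_i$, then all four values coincide. Such an index can be put arbitrarily in $J$ (say), so $\{1,\dots,n\}=J\cup K$ can be taken as a disjoint union.

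Next I will split the sum defining $KL(a\|b)$ along this partition. For $j\in J$ we have $a_j=b_j$, so $\log\frac{a_j}{b_j}=0$ and the term vanishes. This also holds when $a_j=b_j=0$, using the convention $\frac{0}{0}=1$ together with the usual convention $0\log\frac{0}{q}=0$. Likewise, for $j\in J$ we have $c_j=d_j$, so the corresponding terms of $KL(c\|d)$ vanish.

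It remains to compare the contributions from $K$. For $k\in K$, $a_k=c_k$ and $b_k=d_k$, so $a_k\log\frac{a_k}{b_k}=c_k\log\frac{c_k}{d_k}$ term by term. Summing gives
$$KL(a\|b)=\sum_{k\in K} a_k\log\frac{a_k}{b_k}=\sum_{k\in K} c_k\log\frac{c_k}{d_k}=KL(c\|d).$$

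The only delicate point is the handling of zero and infinite terms. These arise when $b_k=0<a_k$, making $KL=+\infty$, or when $a_k=0$. Since the pairs $(a_k,b_k)$ and $(c_k,d_k)$ are identical on $K$, any such term appears identically on both sides. Hence the equality also holds in the extended sense, including the case where both divergences are infinite. I expect no real obstacle beyond stating these conventions explicitly. As a side remark, the same argument shows the proportion also preserves $KL(b\|a)=KL(d\|c)$, in accordance with the internal reversal property.
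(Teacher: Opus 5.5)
Your proof is correct and follows essentially the same route as the paper. Both split the indices via Proposition~\ref{arigeo} into those where $a_i=b_i$ (both summands vanish) and those where $a_i=c_i$, $b_i=d_i$ (the summands agree term by term); your explicit treatment of zero and infinite terms is a small extra precision that the paper leaves implicit.
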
  
{\bf Proof} 
Since $KL(a||b) = \Sigma_{i=1, n} a_i\log\frac{a_i}{b_i}$ and   $KL(c||d) = \Sigma_{i=1, n} c_i\log\frac{c_i}{d_i}$,
let us look at the different  terms $i$ : $a_i\log\frac{a_i}{b_i}$ and $c_i\log\frac{c_i}{d_i}$; 
due to Proposition \ref{arigeo}, we know that there are two cases:

- case (1): $a_i=b_i$ (and  $c_i=d_i$): both terms are zero; 
 case (2): $a_i=c_i$: then we have 
$a_i\log\frac{a_i}{b_i} = c_i\log\frac{c_i}{d_i}$  since $\frac{a_i}{b_i}= \frac{c_i}{d_i}$.
Hence $KL(a||b)=KL(c||d)$.}\hfill $\Box$\\

Since $::_{arigeo}$ satisfy the analogical proportion postulates, we also have 

\begin{Prop}   If $a:b::_{arigeo}c:d$ then i) $KL(a||c)=KL(b||d)$;\quad
ii) $KL(b||a)=KL(d||c)$; \quad  
 iii) $KL(d||b)=KL(c||a)$.
 \end{Prop}
\noindent {\bf Proof} i) thanks to Proposition \ref{KL} and central permutation; ii) since  $a:b::_{arigeo}c:d\Rightarrow b:a::_{arigeo}d:c$, by successive applications of the central permutation, symmetry, and again central permutation;  iii) the last equality, which corresponds to the permutation of the extremes, can be obtained by applying the second equality to the first one. \hfill $\Box$ \\

Finally, we have the following property for the arithmetico-geometric analogical proportion between probability distributions.
 
\begin{Prop}\label{arigeoind} 

If on the one hand four probability distributions $a, b, c,d$ over $S$ are such that $a : b ::_{arigeo} c : d$
and on the other hand four probability distributions $a', b', c', d'$  over $S'$
are such that $a' : b' ::_{arigeo} c' : d'$
then due to  Property   \ref{linea} for arithmetic proportions,   the probabilities of  two events $E_1, E_2$ calculated from these respective distributions, say $p_a, p_b, p_c, p_d$ and $p_{a'}, p_{b'}, p_{c'}, p_{d'}$ are such that 
$p_a : p_b ::_{arigeo} p_c: p_d$ and $p_{a'} : p_{b'}::_{arigeo} p_{c'} : p_{d'}$ (see Observation \ref{event}). 
But, due to Property \ref{Prod} for geometric proportions, we then have 
 $$p_a p_{a'} : p_b p_{b'} ::_{arigeo} p_c p_{c'} : p_d p_{d'}$$

This means, assuming independence of $E_1$ and $E_2$, that the probability distributions of their conjunction 
are also in arithmetico-geometric analogical proportion. 
  \hfill $\Box$
\end{Prop}

\subsection{Towards further developments}\label{further} 
In the previous section, the probabilities where associated with a finite set of values $X$ without any particular structure. We may wonder if the previous results can be extended to other kinds of probabilities, for instance, {defined via} continuous densities. In such a case, definitions extend straightforwardly: $a : b ::_{ari} c : d $ 
if for all $x \in X = \mathbb{R} $ we have
$a(x) - b(x) = c(x) - d(x)$ where the four densities $a, b, c, d$ obey  $\int_{\mathbb{R}} 
a(x)dx= \int_{\mathbb{R}} b(x)dx = \int_{\mathbb{R}} 
c(x)dx = \int_{\mathbb{R}} d(x)dx=1$. The geometrical proportion can be similarly extended in a pointwise manner.
{Some examples of an arithmetico-geometric analogical proportion between continuous distributions, defined using piecewise linear functions, are given in Annex.}

Another option for working with discrete or continuous distributions characterized by two parameters, such as binomial distributions
B(n, p) or Gaussian distributions $\mathcal{N}(\mu,\sigma)$, is to define analogical proportions directly between their parameters (possibly with two different numerical  proportions), rather than in a pointwise manner. 
For instance, consider $a, b, c, d$ be four Gaussian probability density functions, with respective means $\mu_a,\mu_b,\mu_c,\mu_d$ and standard deviations
$\sigma_a, \sigma_b, \sigma_c, \sigma_d$, and let us say that they form an arithmetico-geometric analogical proportion iff
$\mu_a-\mu_b=\mu_c-\mu_d$ and their variances are such that $\sigma^2_a /\sigma^2_b = \sigma^2_c / \sigma^2_d$. Then we have \cite{Soch}  $$KL(a||b)=\frac{1}{2}[\frac{\sigma_a^2}{\sigma_b^2} - ln(\frac{\sigma_a^2}{\sigma_b^2}) + \frac{(\mu_a - \mu_b)^2}{\sigma_b^2} -1],$$  This makes clear that $KL(a||b)=KL(c||d)$ only if $\sigma_b^2=\sigma_d^2$. But  due to the geometric constraint, we should also have $\sigma_a^2=\sigma_c^2$,  which  limits the significance of the result.

When the set $X$  can be equipped with structures such as order, and distance (this is the case if $X =   \mathbb{N} $ or  $X =   \mathbb{R} $, or Cartesian products thereof), other comparison metrics such as Wasserstein distance may be used. Indeed, consider the case $X =   \mathbb{R} $, and let us  denote $CDF_a$ the cumulative distribution of $a$, i.e.: $$CDF_a(y)=\int_{-\infty}^y a(x)dx $$ 
If $a:b::_{ari}c:d$, we have (by linearity of the integral):
$$\forall y \in \mathbb{R}, CDF_a(y) - CDF_b(y) = CDF_c(y) - CDF_d(y).$$
If we consider Wasserstein distance for $p=1$, because  we have \cite{GibbsSu,Ramdas}: $$W_1(a,b)=\int_{\mathbb{R}} |CDF_a(y)-CDF_b(y)|dy,$$
we will have (using the  $L1$ metric on $\mathbb{R}$) \ \
$W_1(a,b)=W_1(c,d).$ 

The developments outlined above would require a more precise, detailed and complete investigation. This is left for further research.

\section{From analogical proportions between profiles to  analogical proportions between probabilities}\label{compat}
In this section we discuss how an analogical proportion between four profiles may be associated with an   analogical proportion between probabilities\footnote{However note that positive values that add up to 1 are not necessarily probabilities. It may be also the coefficients of a weighted sum in multi-criteria aggregation. Thus,
we can adjust a set of coefficients $d$ with respect to three other sets of coefficients $a, b, c$ by solving
an analogical equation $a:b::c:x$ between the coefficient distributions which are associated with different profiles that form an analogical proportion with the profile corresponding to $d$, as in the following example:
\begin{itemize}
\item $a$: coefficients of the subjects in the science stream at school 1,
\item $b$: coefficients of the subjects in the literature stream at school 1,
\item $c$: coefficients of the subjects in the science stream at school 2,
\item $d$: coefficients of the subjects in the literature stream at school 2, 
\end{itemize}
}. We first recall how analogical proportions between profiles can be associated with analogical proportions between classes. We then consider a compatibility problem with probabilities where the existence of  analogical proportions between probabilities associated to profiles is a matter of adjustment. We finally discuss situations where one  may expect that an analogical proportion between profiles be coupled with an analogical proportion between probability distributions associated with the profiles. 

Probabilities and classification are naturally related in Bayes' theorem, which writes:
$$P(cl|x) P(x)= P(x|cl)P(cl)$$
where $x$ is a vector of attribute values describing an item, $cl$ is a class, and we assign to $x$, among the possible classes, the class $cl$ that maximizes $P(c|x)$ in Bayesian classification. It is worth noticing that this equality is a geometric proportion: 
$$P(x): P(x|cl) ::P(cl):P(cl|x)$$
which can be also written $\frac{P(x|cl)}{P(x)}=\frac{P(cl|x)}{P(cl)}$ or $ P(x): P(cl) :: P(x|cl):P(cl|x)$.

However, classification by analogical proportion does not follow the same mechanism, as we will now recall.
\subsection{Classification}\label{classification}
Analogical proportions may go well with classification. Defining two classes in the toy example of Table \ref{Example}:  
$cl_1$ { corresponding to} Age  $\leq 40  \mbox{ and  Education level }= high $,  $cl_2$ { corresponding to} Age $> 40    \mbox{ and Education level} = high$,
we obtain  Table \ref{ex1cl}, where we can see that the classification column offers a valid analogical proportion.  
\begin{table}[!ht] 
\caption{Analogical proportion with classes - 1 }\label{ex1cl}
\centering
$
\begin{array}{c||c|c|c|c|}
& \mbox{Education level } &   \mbox{ Gender}  &  \mbox{ Age} & \mbox{1st classification}  \\
\hline
a & \mbox{high} & M     &    \leq 40& cl_1 \\
\hline    
b & \mbox{high} &   F   &    \leq 40   & cl_1\\
\hline
c& \mbox{high} & M     &    > 40    & cl_2\\ 
\hline  
d & \mbox{high}  & F     &   > 40   & cl_2\\
\hline
\end{array}
$
\end{table}
Defining the two classes rather as $cl'_1$ { corresponding to} Gender $=M$  and   Education level$=high$ and $cl'_2$ { corresponding to} Gender $=F    \mbox{ and Education level}$ $=high$, we obtain Table  \ref{ex2cl}, where  the classification column exhibits another pattern of a valid analogical proportion. Note that in the cases of  Tables \ref{ex1cl} and \ref{ex2cl}  the classes are defined by means of simple logical expressions in terms of the given attributes.

\begin{table}[!ht]
\caption{Analogical proportion with classes - 2}
\centering
$
\begin{array}{c||c|c|c|c|}

& \mbox{Education level } &   \mbox{ Gender}  &  \mbox{ Age}  & \mbox{2nd classification}  \\
\hline
a  & \mbox{high} & M     &    \leq 40    & cl'_1  \\
\hline
b & \mbox{high} &   F   &    \leq 40    & cl'_2 \\
\hline
c & \mbox{high} & M     &    > 40    & cl'_1 \\
\hline
d & \mbox{high}& F     &   > 40    & cl'_2\\
\hline
\end{array}
$
\label{ex2cl}
\end{table}

Both Tables \ref{ex1cl} and \ref{ex2cl} illustrate a situation  such that if $a:b::c:d$, then it is also true for the classes, namely we have   $cl(a):cl(b)::c(c):cl(d)$. This corresponds to the following inference rule, which is the basis of the analogical classification procedure: the analogical proportion is preserved on the classes:
\begin{equation}\label{anainf}
   \frac{a:b::c:d}{cl(a):cl(b)::cl(c):cl(d)}
   \end{equation}
As such, it
is an {\it analogy conservation} rule. Analogical inference has been explicitly defined in the work of \cite{DavRus1987} as an {\it analogical jump} that from $R(x), R(y), S(x)$ infers $S(y)$ where $R, S$ denote properties and $x, y$ are items. It has been formally established that the inference rule (\ref{anainf}) is a special form of analogical jump    \cite{ijar/BounhasPR17}. 
This pattern of inference allows assigning a class to a new vector $d$, whose class is unknown, by solving the analogical equation
$cl(a):cl(b)::cl(c):x$ when $cl(a),cl(b), cl(c)$ are known (when a solution exists). 
In the case of multiple triplets $a, b, c$ leading to different solutions, we use the majority vote principle to
select a solution for the class to be predicted. {Using only selected triplets $a,b,c$  \cite{ijar/BounhasP23}, the above classification method predicting $cl(d)$ from $a, b, c, d$ and
$cl(a)$, $cl(b)$, $cl(c)$, has proven effective on benchmarks, performing at least as well as k-NN, C4.5, SVM, and Naive Bayes across 16 datasets from the UCI repository. See also \cite{ijar/BounhasP24}. }

It is clear that the inference pattern (\ref{anainf}) is not universally valid, and different triplets can lead to different conclusions. 
The reader is referred to \cite{CouHugPraRicIJCAI2017} and to \cite{CouceiroLMPRSUM2020} for the characterization of the classification functions for which the above inference pattern is always valid in the Boolean case and in the nominal case respectively. For the Boolean case,
it corresponds to functions that are affine (in terms of a sum which is the exclusive or and a product which is the conjunction). In \cite{CouHugPraRicIJCAI2018}, one can find an attempt at characterizing the error range in probabilistic terms when the inference rule (\ref{anainf}) is applied beyond its validity domain. The result has been recently improved in  \cite{aaai/CunhaLCB26}.

Until now, the form of reasoning allowed by pattern (\ref{anainf}) has been restricted to Boolean, nominal, and numerical values 
{for describing items}, and to Boolean or nominal values { for describing classes}. 
In the following, we discuss and explore potential extensions of this pattern to cases {in which} the consequence part involves probabilities.

\subsection{A compatibility problem}\label{compa} 
Let us start with a simple question. Since a set of attribute values may describe a single item as well as the profile of a subpopulation of items, one may wonder in this latter case if the probabilities of belonging to a profile 
form an analogical proportion when the profiles themselves form an analogical proportion. Let us take an example.

We use the example of Table \ref{Example} with men or women, under or over 40. 
When $a, b, c, d$  represent profiles within a given population of individuals, it is legitimate to consider
the proportion of individuals with profile $a, b, c$ or $d$, denoted $p(a), p(b), p(c), p(d)$, 
interpreted as approximations of the corresponding probabilities. 
For instance,
$p(a) = Prob (Gender=M, Age\leq 40 \ | \ Education   = high)$, etc. { as shown in Table \ref{exP1}.}
Note that in that particular case, we have  $p(a) + p(b) + p(c) + p(d) = 1$.
\begin{table}[!ht]
\caption{Analogical proportions with fixed probabilities}\label{exP1}
\centering
$
\begin{array}{c||c|c|c|c|}

&   \mbox{ Gender}  &  \mbox{ Age} & \mbox{Education} & \mbox{Probability}  \\
\hline
a  & M     &    \leq 40   & \mbox{high} & p(a) \\
\hline
b &   F   &    \leq 40   & \mbox{high} & p(b)\\
\hline
c  & M     &    > 40    & \mbox{high} & p(c)\\
\hline
d  & F     &   > 40   & \mbox{high} & p(d)\\
\hline
\end{array}\vspace{-0.5cm}
$
\label{ex1prob}
\end{table}
A more general situation, { introducing parameters $\eta$ and $\theta$,} is pictured in Table \ref{exP2}. Depending on the values of the parameters
{$\eta$ and $\theta$},
one may have $p(a) + p(b) + p(c) + p(d)$ less than, equal to or greater than 1.
\begin{table}[!ht]
\caption{Analogical proportions with parametric probabilities}\label{exP2}
\centering
$
\begin{array}{c||c|c|c|c|}
&   \mbox{ Gender}  &  \mbox{ Age} & \mbox{Education} & \mbox{Probability}  \\
\hline
a  & M     &    \leq \eta   & \mbox{high} & p(a) \\
\hline
b &   F   &    \leq \eta  & \mbox{high} & p(b)\\
\hline
c  & M     &    > \theta   & \mbox{high} & p(c)\\
\hline
d  & F     &   >\theta   & \mbox{high} & p(d)\\
\hline
\end{array}\vspace{-0.3cm}
$
\label{ex2prob}
\end{table}
Note that both in Tables \ref{exP1} and \ref{exP2}, $a:b::c:d$ holds { as nominal proportions for} each of the attribute column. Then a natural question is: under what conditions, one may also have $p(a) : p(b) ::_x p(c) : p(d)$ where the subscript  $x$ refers either to the arithmetical proportion or to the arithmetico-geometric proportion? 
 
Suppose we have an arithmetic proportion on the probabilities in Table \ref{exP1}, this means we have $p(a) - p(b) = p(c) - p(d)$. But since $p(a) + p(b) + p(c) + p(d) = 1$, this yields $p(b) + p(c) = p(a) + p(d) = 1/2$. 
For instance, suppose we have, among the people with high education, $60\%$ of people with Age $\leq$ 40 (with  $40\%$ of men and $20\%$ of women),  and $40\%$ with Age > 40 (with  $30\%$ of men and $10\%$ of women). This gives $0.4 : 0.2 :: 0.3 : 0.1$, which is an arithmetic proportion. 
In case we have the same number of people with Age $\leq$ 40 and Age $>$ 40, with $30\%$ of men in both cases, then we get $0.3 : 0.2 :: 0.3 : 0.2$, which is an arithmetico-geometric proportion. 
This  means that in the population under consideration, i.e., those with a high degree in education,
the  difference between the proportions of $a$ and $b$ is the same as between $c$ and $d$, 
here that the proportion between men and women with a “high” level of education does not depend on the Age group,  $\leq$ 40 or > 40. It expresses a form of equilibrium between the two age groups where there is the same difference between M and F in the context of high education. 

Adjusting the thresholds $\eta$, $\theta$ in Table
\ref{exP2} one may look for analogically balanced groups. But in general, it is clear that we may have $a : b :: c : d $ while $ p(a) : p(b) ::_x p(c) : p(d)$ may turn to be wrong. 

\subsection{Proportions between probability distributions associated to analogical profiles}\label{ana+distri}
In the example of the previous subsection, we observe that the probabilities associated with $a, b, c, d$ are not the probabilities of events referring to the values of some new attribute 
not used in the profile description,
but rather the probabilities that an item belongs to a given profile. 
This is why we have a ``vertical'' constraint linking the values of $p(a), p(b), p(c)$ and $p(d)$.

However, it would be more natural to expect an analogical proportion in case 
$a, b, c, d$ are no longer associated with a new ``property'' (the class in a classification problem), but with the probability of an event referring to a new property, or more generally to a probability distribution over the possible values of a new attribute (not used in the description of the profile). 
{Recall from Observation \ref{event} that} an analogical proportion between probability distributions leads to 
an analogical proportion  between the probabilities of any event computed from these distributions. Note also that, in this case, the constraints on the probability values become ``horizontal'' and express the normalization of the distributions.

This corresponds to the  following inference pattern  
where $p_a, p_b, p_c, p_d$ denote the 
probabilities of an event, or probability distributions. We use the arithmetic proportion in the pattern below since it is less  demanding than the arithmetico-geometric proportion. 

\begin{equation}\label{anadistri}
\frac{a:b::_{ari}c:d}{p_a: p_b::_{ari}p_c:p_d}
\end{equation}

\section{Illustration}\label{illustration}
In this section, our aim is to empirically check to what extent the inference rule (\ref{anadistri})  is valid, 
where $a, b, c, d$ represent profiles in a given population and $p_a, p_b, p_c, p_d$ are  probability distributions associated to these profiles.
\subsection{Datasets}
In order to approximate  probabilities with frequencies, we investigate relatively huge publicly available datasets:
\begin{enumerate}
\item The well-known \textbf{MovieLens 100K} dataset~\cite{harper2015movielens}, 
obtainable from \url{https://grouplens.org/datasets/movielens/100k}, 
comprises 100{,}000 ratings on a 1--5 scale from 943 users across 1{,}682 
movies, with each user having rated at least 20 movies. Features are divided 
into three groups: user demographics (age, gender, occupation, ZIP code), 
item metadata (movie title, release date, genre), and interaction data 
(timestamped ratings). The target variable is the \textbf{rating}, an integer 
ranging from 1 (poor) to 5 (excellent).
\item The \textbf{US Traffic Accidents} dataset~\cite{moosavi2019accidents}, 
obtainable from \url{https://www.kaggle.com/datasets/sobhan-moosavi/us-accidents}, 
records approximately 7.7 million accidents across 49 US states, with around 
47 features covering incident details, geolocation, weather conditions, and 
road characteristics. The target variable is \textbf{severity}, an integer 
from 1 to 4 indicating the impact on traffic flow, ranging from minimal 
disruption (1) to severe disruption (4).
\end{enumerate}
\subsection{Profiles}
In both datasets, we have defined what is the target variable we focus on (rating for MovieLens and severity for US Traffic Accidents).
To clarify the link with the vocabulary used in the previous sections, let us explain what are the profiles. We describe in detail for MovieLens but this is exactly the same philosophy for US Traffic Accidents.
In MovieLens, we only focus on user metadata and we first quantize the age range into 5 age groups: 18-24, 25-34, 35-44, 45-54, 55+.
For instance, focusing on profiles involving the 2 features $gender$ and $age\_group$, we have $2 \times 5 = 10$ candidate profiles.
But for $age\_group$ and $occupation$, we have 105 candidate profiles.
As a preliminary step, we discard all profiles that are not sufficiently represented in the dataset. To do that, we fix a minimum  threshold and call a profile {\it consistent} if it is satisfied by at least that many instances. Profiles below this threshold are too sparse to support reliable conclusions and are excluded from the analysis. As an example:
\begin{itemize}
    \item With 3 features $gender,age\_group,occupation$, the candidate profile $('F', '45-54', 'marketing')$ is represented by only 20 instances in the dataset,
    \item With 2 features, the candidate profile $('F', '35-44')$) is represented by 5061 instances.
\end{itemize}
Because restricted to these features, the dataset is categorical, the concept of analogical proportion between profiles is defined componentwise as described in Section \ref{background}. 
Then, looking for non trivial analogies involving only consistent profiles, we get as candidate analogy: 
$$('F', '35-44'):('F', '45-54')::('M', '35-44'):('M', '45-54')$$
where:
$$a=('F', '35-44'), b=('F', '45-54'), c=('M', '35-44'), d=('M', '45-54')$$
and where we have a population of 5061 for profile $a$, 3316 for profile $b$, 12093 for profile $c$ and 9182 for profile $d$.
Given a profile $a$, the conditional distribution $P(rating/a)$ is represented with an histogram with 5 bars because the candidate values of a rating of movies range from 1 to 5.
We provide Figures \ref{profile_movielens_1} and \ref{profile_movielens_2} as an illustration of histograms linked to two distinct profiles.
\begin{figure}[htbp]
    \centering
    \caption{Examples of Histograms for Movielens}
    \begin{subfigure}[b]{0.45\textwidth}
        \centering
        \includegraphics[width=\textwidth]{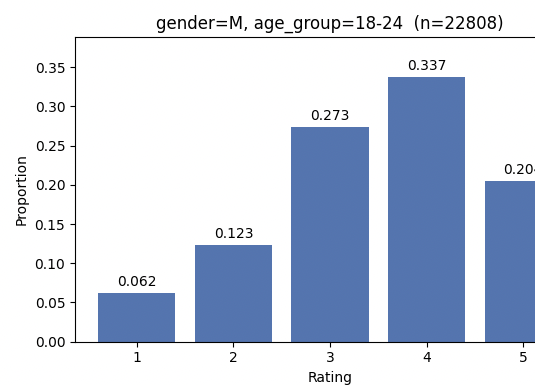}
        \caption{}
        \label{profile_movielens_1}
    \end{subfigure}
    \hfill
    \begin{subfigure}[b]{0.45\textwidth}
        \centering
        \includegraphics[width=\textwidth]{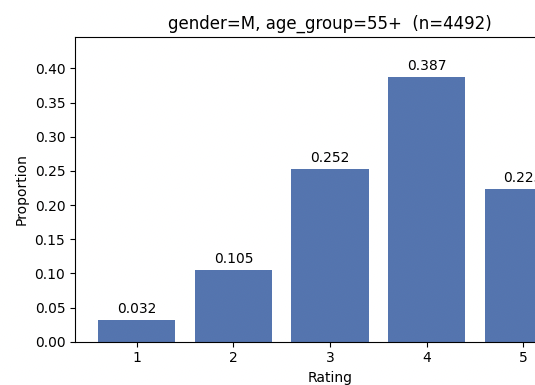}
        \caption{}
        \label{profile_movielens_2}
    \end{subfigure}
\label{profile_histo}
\end{figure}

\subsection{Results presentation}
In our experiments, each profile $a$ is associated with an histogram viewed as a vector in $\mathbb{R}^n$, where $n = 5$ for MovieLens and $n = 4$ for US Traffic Accidents. When four profiles $a, b, c, d$ form an analogical proportion, analogy conservation holds if and only if the four vectors form a parallelogram in $\mathbb{R}^n$. A natural measure of deviation from this conservation property is the analogical dissimilarity $\mathit{AD}$, defined in Subsection~\ref{ad-extension}.

Each subset of features defines a set of profiles, whose length is the number of features it comprises. Among all candidate profiles, only those meeting the consistency threshold are retained. From these consistent profiles, we extract all valid analogical proportions $a:b::c:d$ and we organise them into a box, where each row corresponds to one such proportion. The axes are as follows:\begin{itemize}
    \item The $y$-axis enumerates the quadruples $a, b, c, d$. 
    \item The $x$-axis displays the analogical dissimilarity $AD \in [0,1]$ between the histograms associated to $a, b, c, d$.
\end{itemize}
The lower this $x$-axis value, the better the analogy is preserved in the distribution space. To provide a fast visual understanding of the figures, each row is rendered as an horizontal bar coloured according to three levels:
green associated to $AD$ less than or equal to $0.05$, 
orange for $AD$ between $0.05$ and $0.10$ and red otherwise.
Ideally, we expect a majority of short green bars.
The title of each box identifies  the  feature subset used to build profiles.

\subsection{MovieLens results}
We have conducted experiments using profiles of length 2 and 3, as no consistent profiles of length 4 were available whatever the threshold. 
The results for profile of length 2 are gathered in Figures \ref{2-1000}, \ref{2-2000}, \ref{2-3000}, \ref{2-4000}.
\begin{figure}[htbp]
\centering
\caption{Profiles length: 2 - Consistency Threshold: 1000}
\includegraphics[width=1.0\linewidth]{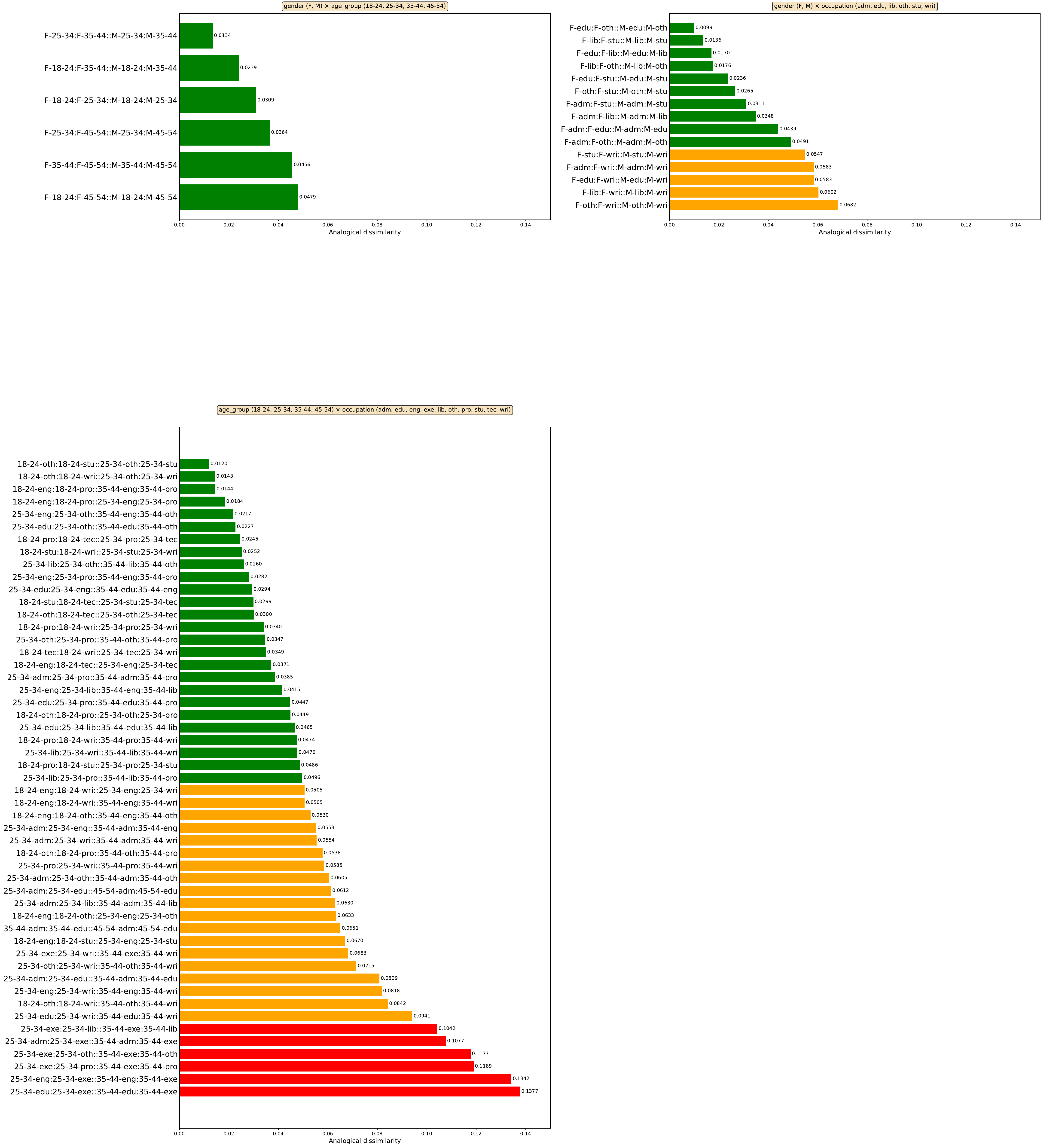}
\label{2-1000}
\end{figure}
\begin{figure}[htbp]
\centering
\caption{Profiles length: 2 - Consistency Threshold: 2000}
\includegraphics[width=1.0\linewidth]{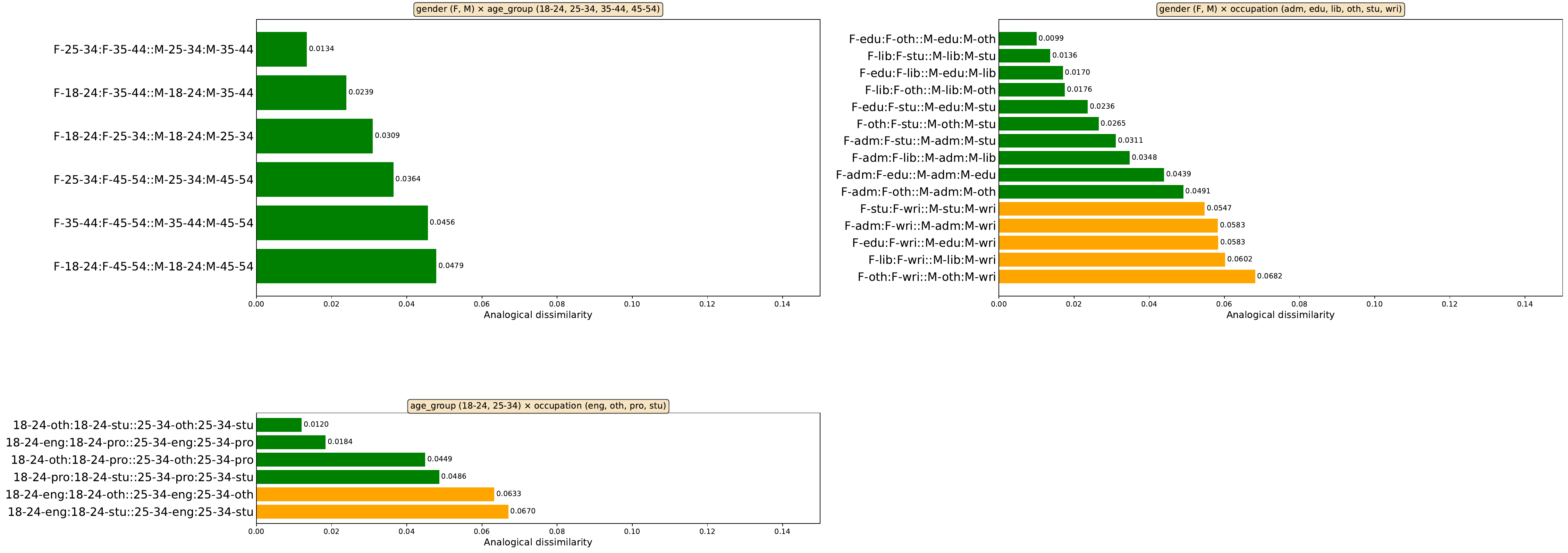}
\label{2-2000}
\end{figure}
\begin{figure}[htbp]
\centering
\caption{Profiles length: 2 - Consistency Threshold: 3000}
\includegraphics[width=1.0\linewidth]{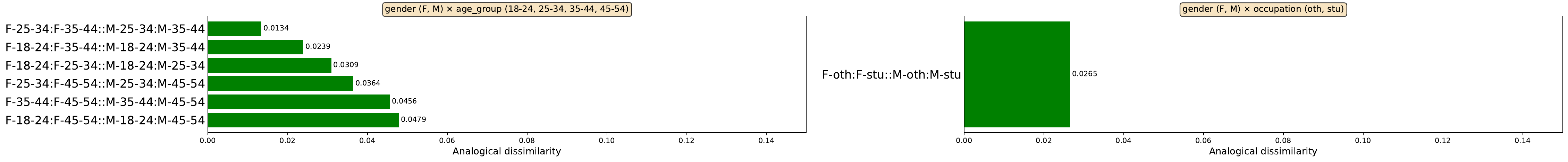}
\label{2-3000}
\end{figure}
\begin{figure}[htbp]
\centering
\caption{Profiles length: 2 - Consistency Threshold: 4000}
\includegraphics[width=1.0\linewidth]{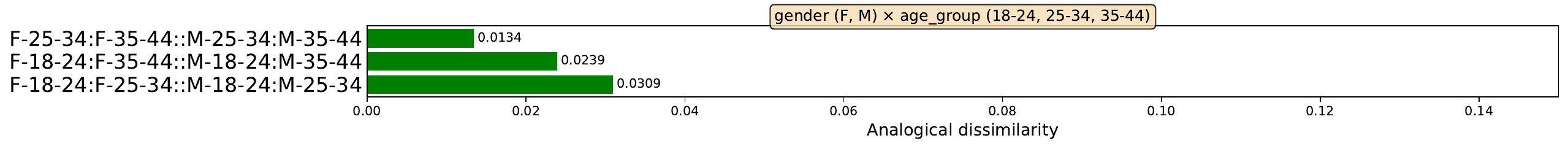}
\label{2-4000}
\end{figure}

The results for profile of length 3 are gathered in Figure \ref{fig:ml3-all}. It has to be noted that we do not have consistent profiles of length 3 when the consistency threshold is higher than 2000.
\begin{figure}[htbp]
    \centering
    \caption{Profiles length: 3}
    \begin{subfigure}[t]{0.49\linewidth}
        \centering
        \caption{Consistency threshold $= 1000$}
        \includegraphics[width=\linewidth]{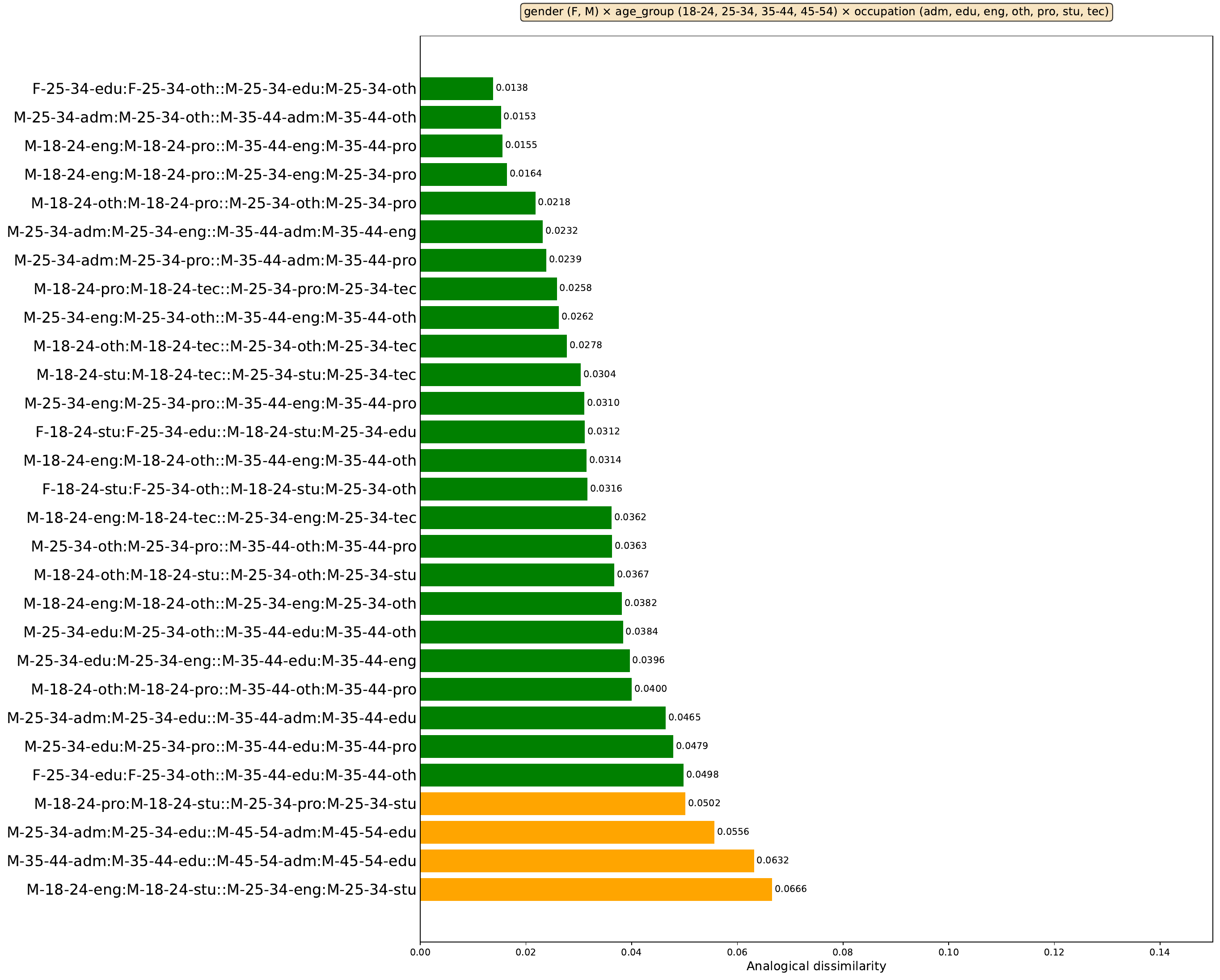}  
        \label{fig:ml3-thr1000}
    \end{subfigure}
    \hfill
    \begin{subfigure}[t]{0.49\linewidth}
        \centering
        \caption{Consistency threshold $= 2000$}
        \includegraphics[width=\linewidth]{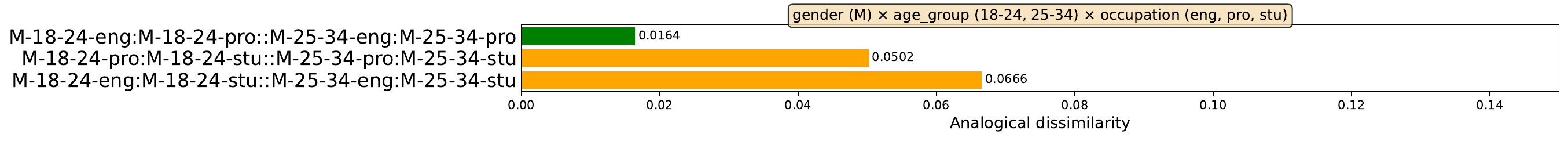}
        \label{fig:ml3-thr2000}
    \end{subfigure}
    \label{fig:ml3-all}
\end{figure}
We may note that analogical dissimilarity is always smaller than 0.2 that is why we have limited the x-axis to 0.2.
\newpage
\subsection{US Traffic Accidents results}
We proceed similarly as with MovieLens with some differences:
\begin{itemize}
    \item The original 47 features are grouped under four categorical variables $timezone,day\_night, weather\_group, wind\_group$ that we use as the profile features in our experiments. For instance, $weather\_condition$ has high cardinality that we group into $weather\_group$ with only four possible values $Clear, Rain, Snow, Fog$.
    \item Due to the huge size of the dataset, we sample only 300000 instances among 7.7 millions.
    \item We increase the consistency threshold.
\end{itemize}
In this context, the variable of interest is accident severity, which takes integer values from 1 to 4. For a given profile, the associated distribution over severity levels is represented by an histogram of 4 bars, as illustrated in Figures~\ref{histo-usa-1} and~\ref{histo-usa-2}.:
\begin{figure}[htbp]
    \centering
    \caption{Examples of Histograms for US Traffic Accidents}
    \begin{subfigure}[b]{0.45\textwidth}
        \centering
        \caption{}
        \includegraphics[width=\linewidth]{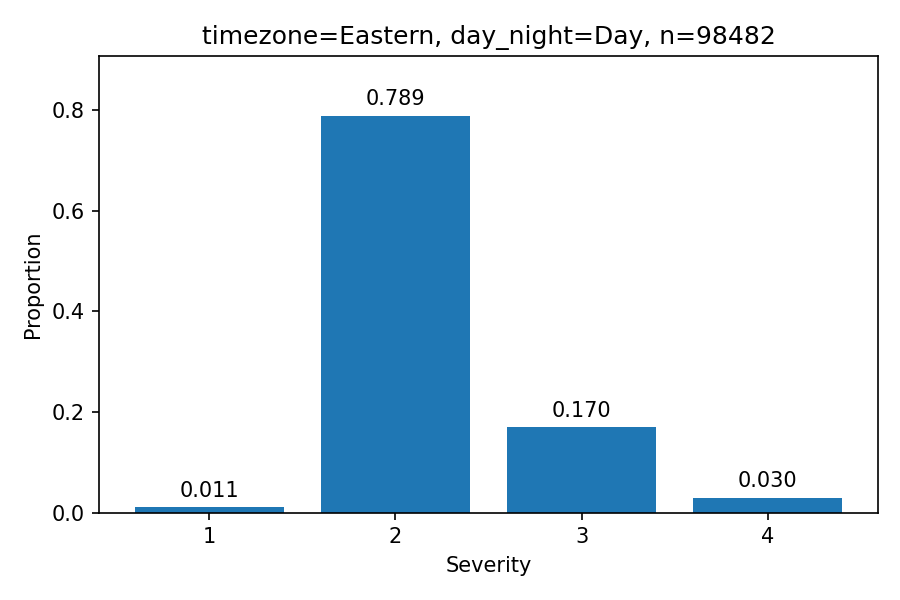}
        \label{histo-usa-1}
    \end{subfigure}
    \hfill
    \begin{subfigure}[b]{0.45\textwidth}
        \centering
        \caption{}
        \includegraphics[width=\linewidth]{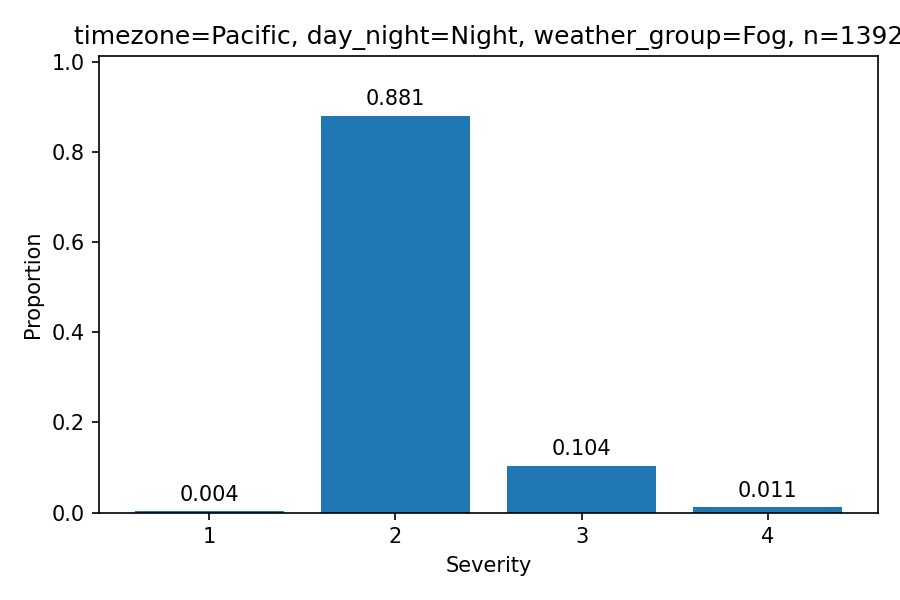}
        \label{histo-usa-2}
    \end{subfigure}
\label{histo-usa}
\end{figure}

Results for profile of length 2:
\begin{figure}[htbp]
\centering
\caption{Profiles length: 2 - Consistency Threshold: 5000}
\includegraphics[width=1.0\linewidth]{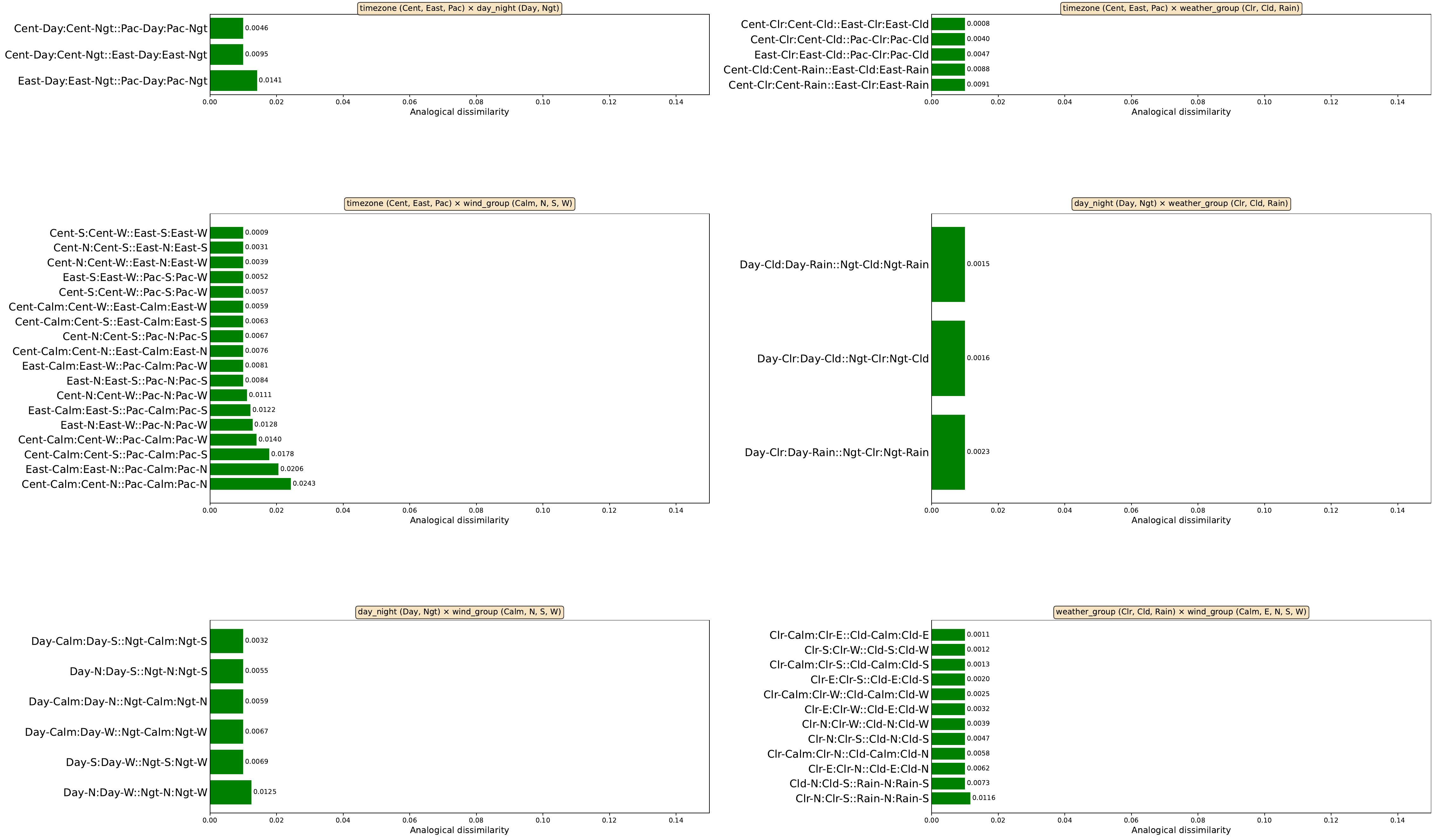}
\label{usa-2-5000}
\end{figure}

\begin{figure}[htbp]
\centering
\caption{Profiles length: 2 - Consistency Threshold: 10000}
\includegraphics[width=1.0\linewidth]{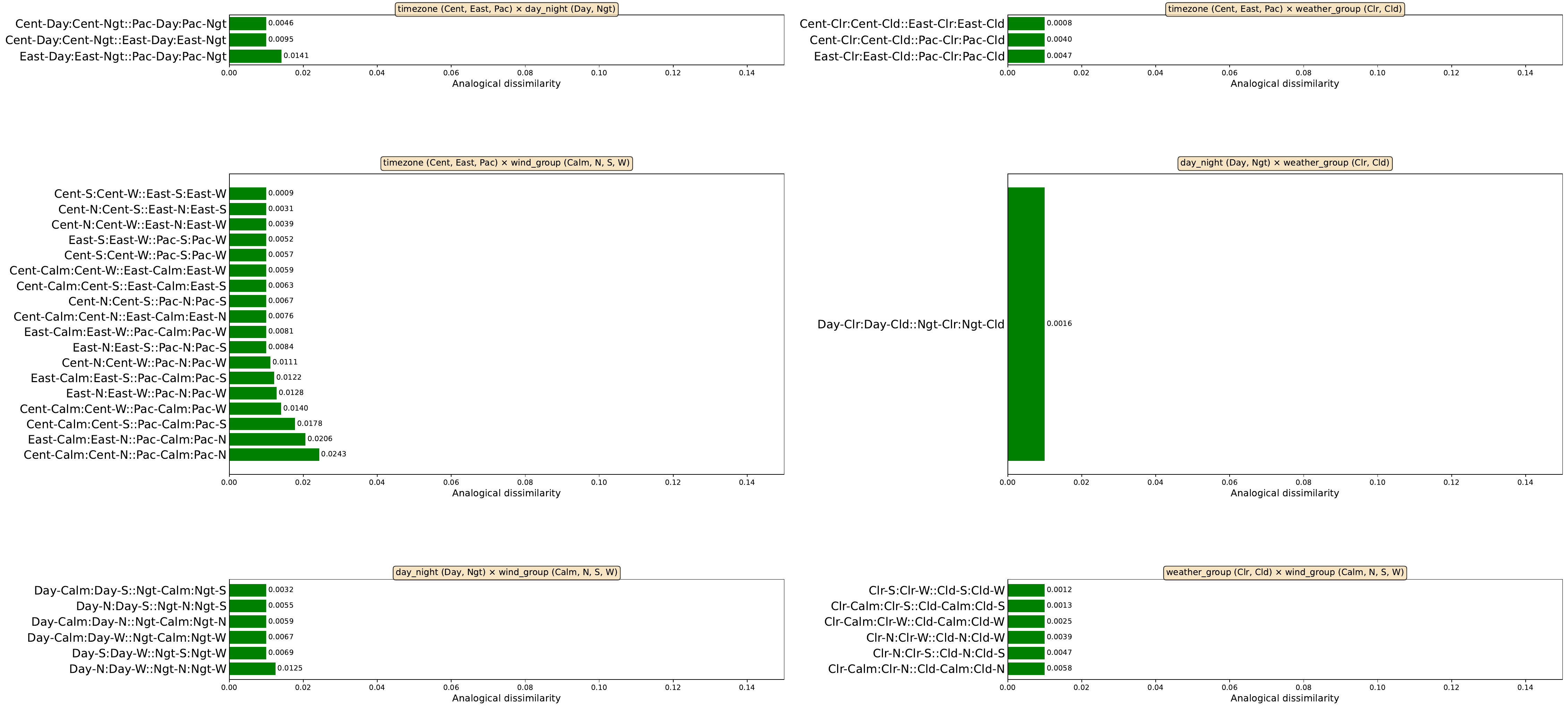}
\label{usa-2-10000}
\end{figure}

When moving to profile of length 3, the threshold of 5000 provides a lot of valid analogical proportions so we move directly to a threshold of 10000 instances.

\begin{figure}[htbp]
\centering
\caption{Profiles length: 3 - Consistency Threshold: 10000}
\includegraphics[width=1.0\linewidth]{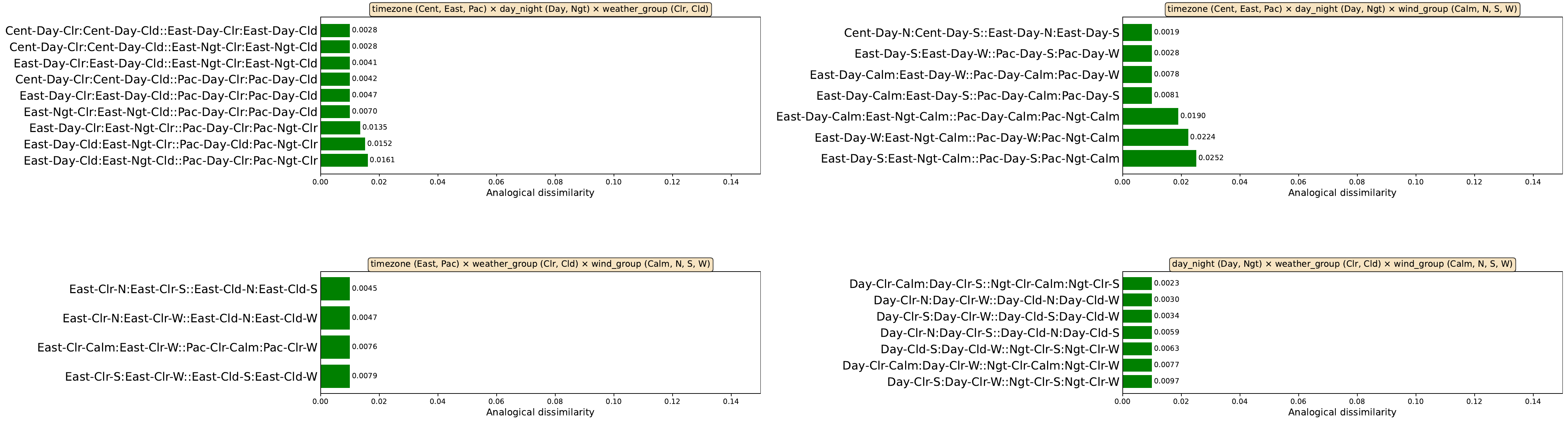}
\label{3-10000}
\end{figure}
\newpage
\subsection{Comments}
The inference rule (\ref{anadistri}) corresponds here to the expression of simple questions.
For instance in the case of MovieLens, we could ask: \\
{\it"Does the relationship between male and female ratings in age group $25-34$ mirror the same relationship in age group $35-44$, across rating distributions?"}\\
We test the validity of the inference rule for the following analogical proportion between profiles:
$$('M', '25-34') : ('F', '25-34') :: ('M', '35-44') : ('F','35-44')$$
The four profiles are well above the threshold of 4000 since we have ($n$ denotes the number of instances of the corresponding profiles):
\begin{itemize}
    \item $('M', '25-34')$: n=25685 and $('F', '25-34')$: n=9109
  \item $('M', '35-44')$: n=12093 and $('F', '35-44')$: n=5061
\end{itemize}
Computing the analogical dissimilarity $AD$ for there associated distribution gives $0.0134$  which agrees with the hypothesis of the validity of the inference rule (\ref{anadistri}).
But for $$('25-34', 'educator'):('25-34', 'executive')::('35-44', 'educator'):('35-44', 'executive'),$$ we find
$AD=0.1377$, indicating that there is a distortion of more than 13\% and we cannot consider this contributes to the  validation of the inference rule.

In the case of US Traffic Accidents, a question such as: \\
{\it "Does the relationship between daytime and nighttime accidents in timezone $Eastern$ mirror the same relationship in timezone $Pacific$, across severity distributions?"} is also directly translated as questioning the validity of the conservation rule for an analogical proportion:
$$('Eastern', 'Day') : ('Eastern', 'Night') :: ('Pacific', 'Day') : ('Pacific', 'Night')$$
with corresponding numbers of instances:
\begin{itemize}
    \item $('Eastern', 'Day')$: n=98482 and $('Eastern', 'Night')$: n=40553
    \item  $('Pacific', 'Day')$: n=52275 and $('Pacific', 'Night')$: n=28349
\end{itemize}
and $AD=0.0141$

As a conclusion, in view of our green bars where $AD < 0.05$, our investigations on the two datasets show that, when dealing with a sufficiently large number of instances, the analogical proportion between profiles are associated with an arithmetic proportion between the distributions.

\subsection{Investigation with the arithmetico-geometric proportion}
In the previous subsections, we have tested the validity of the inference rule (\ref{anadistri}) using a definition of the analogical proportion between distributions based on the arithmetic proportion. We may wonder if the inference rule is still valid for the arithmetico-geometric definition. For testing it, we need to estimate how far we are from a geometric proportion (for the same cases where have tested the arithmetic proportion). 
Because of the small numbers we deal with, in practice this means working in log-space where:
$$AD_{geo}(a,b,c,d) = \frac{1}{2n}\Sigma_i | log(\frac{a_i}{b_i}) - log(\frac{c_i}{d_i}) | = \frac{1}{2n}\Sigma_i | log(a_i) - log(b_i) - log(c_i) + log(d_i)|$$
We have first investigated  MovieLens with profiles of length 2 and 3, and threshold of 2000 and 4000: results are in Figures
\ref{movielens-geo-2-2000}, \ref{movielens-geo-2-4000} and \ref{movielens-geo-3-2000}.
\begin{figure}
    \centering
    \caption{MovieLens - Profile length: 2 - threshold: 2000}
    \includegraphics[width=1.0\linewidth]{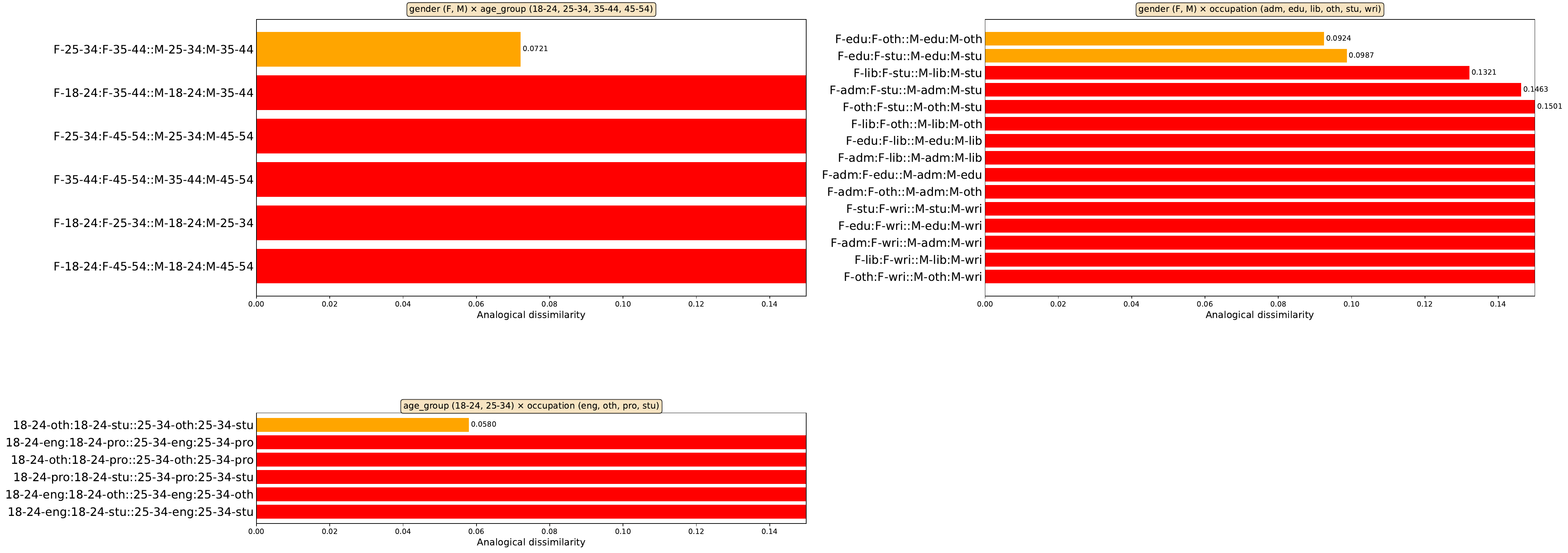}
    \label{movielens-geo-2-2000}
\end{figure}
\begin{figure}
    \centering
    \caption{MovieLens - Profile length: 2 - threshold: 4000}
    \includegraphics[width=1.0\linewidth]{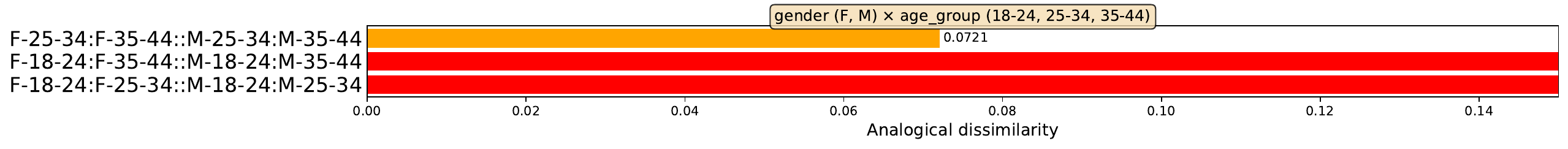}
    \label{movielens-geo-2-4000}
\end{figure}
\begin{figure}
    \centering
    \caption{MovieLens - Profile length: 3 - threshold: 2000}
    \includegraphics[width=1.0\linewidth]{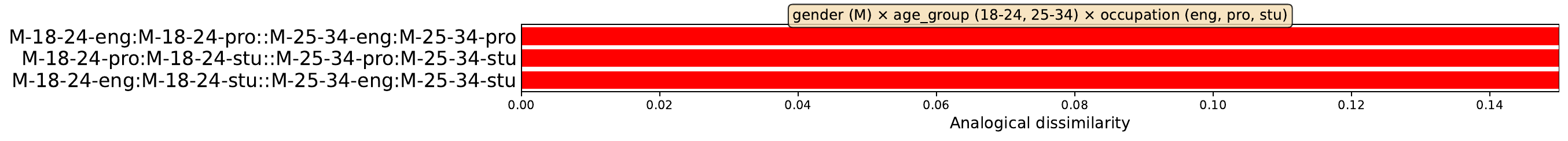}
    \label{movielens-geo-3-2000}
\end{figure}

Then we have investigated US Traffic Accident with profile of length 3 and threshold 10000. Results are in Figure \ref{usa-geo-3-10000}.
\begin{figure}
    \centering
    \caption{US Traffic Accidents - Profile length: 3 - threshold: 10000}
    \includegraphics[width=1.0\linewidth]{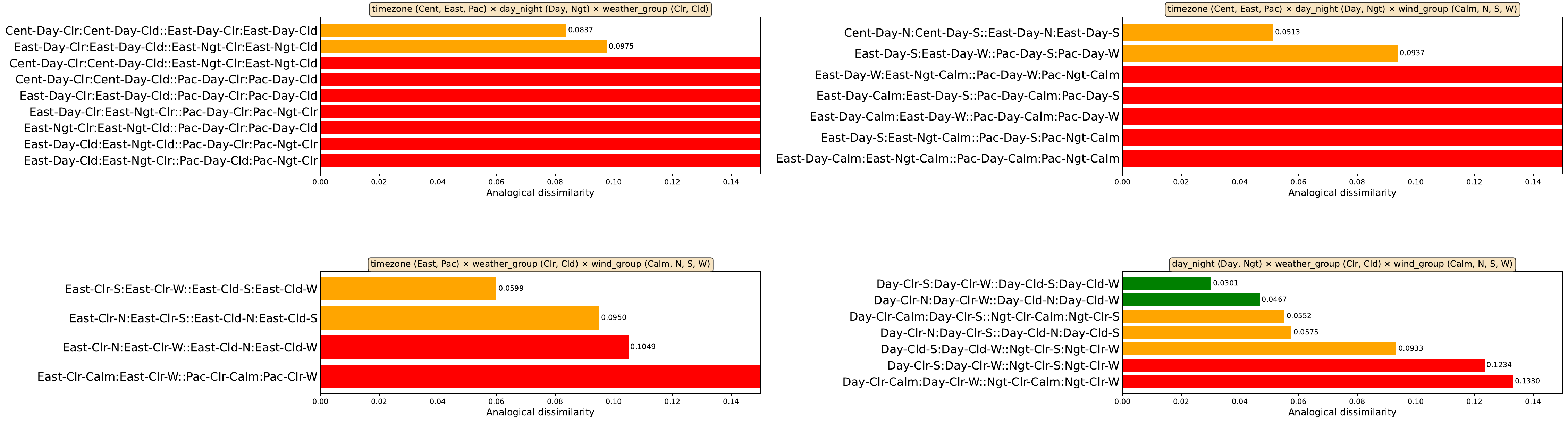}
    \label{usa-geo-3-10000}
\end{figure}

These preliminary experiences suggest that the arithmetico-geometric proportion between probability distributions should be quite rare, which is not a surprise since it is very  demanding.
\subsection{Final discussion}
The results of the experiments with the two data sets indicate that given four profiles forming an analogical proportion, one may expect that distributions of probability associated with them also form an analogical proportion. Obviously there are also cases where the results significantly deviate from this expectation. On the one hand, it is clear that what the results show is troublesome, while on the other hand we would need to understand why it is so. We are at the same stage as we were back when the first analogical classifiers \cite{MicBayDelJAIR2008} achieved those surprising results, before explanations began to emerge, as recalled in subsection  \ref{classification}. More theoretical research is needed to shed light on this issue. More experiments would also be  needed, in particular with profiles including numerical attributes.

Following a line of thought quite similar to case-based reasoning  \cite{sp/20/FuchsLMMNPR20}, transfer learning can be seen as a kind of analogical reasoning performed at a meta-level (see for example \cite{aaai/WangY11}), since it involves taking advantage of what has been learned in a source domain in order to improve the learning process in a target domain related to the source domain. Thus, 
one might wonder whether analogical proportions between probability distributions, in the sense of Definitions  either \ref{ProbAri}, or    \ref{sophi}, could allow for more sophisticated and controlled transfers, using Propositions \ref{ProposAri}, or \ref{ProposNomi} and \ref{exist}.

Since analogical proportions enable the computation of
a missing value from three known ones,
they offer a simple method of estimating a probability distribution associated with a new profile from three other distributions without having to analyze the corresponding population.
\section{Conclusion}
This article is based on the material developed in the two first papers \cite{PraRicEcsqaru2025,miwai/PradeR25} that  explored analogical proportions between probability distributions.
The present paper reorganizes and expands on their contents,
providing a systematic study of the subject, rich in technical findings, 
while presenting experiments that demonstrate the validity of this idea. 
In particular, it has been shown that two definitions of analogical proportions between probability distributions make sense in the finite case: one based only on the  arithmetic proportion  that preserves the total variation distance between the distributions forming the two pairs of the analogical proportion.  
A definition combining arithmetic and geometric proportions is also possible.
This latter definition ensures, the invariance of the Kullback-Leibler divergence between the distributions forming the two pairs in the analogical proportion. Because we define analogical proportion between finite probability distributions in a pointwise manner, the extension to the continuous case requires careful study left for further research  
 (beyond some examples given in Appendix 2). 
 
In this paper, we have examined the analogical relationship that may 
hold between profile  descriptions and  their associated probability distributions. More precisely, the central question is: 
when an analogical proportion holds between four distinct profiles in a population,  
{under what conditions} does a corresponding proportion hold between their associated probabilities? 
Further research is needed to clarify the precise nature of this relationship and to determine the validity and usefulness of such an analogical relationship. 
Moreover, concrete applications remain to be developed.\\

\noindent{\bf Acknowledgments}. 
The authors thank Giuseppe Sanfilippo for posing the question “Do analogical proportions apply to probabilities?” after the presentation of their article \cite{sum/PradeR24}. This question leads them into the exploratory work presented here, and this article hopes to demonstrate that the question was fruitful. This research was supported by the ANR project “Analogies: from Theory to Tools and Applications” (AT2TA), ANR-22-CE23-0023.
\appendix
\section*{Annex: Examples of analogical proportion between continuous distributions} 
We start from a parametric function $S_{x_0, a} $, 
defined on $[0, + \infty)$ and whose graph is displayed in plain line in the next figure, where point $A$ has coordinates $(x_0, y_0)$. 
\begin{figure}[!ht]
    \centering
\begin{tikzpicture}[scale=1.1]
\coordinate [label=right:$ $] (B) at (2.33cm,-1.0cm);
\coordinate [label=above:$A(x_0$] (G) at (2.1cm, -0.13cm);
\coordinate [label=above:${,}y_0)$] (G1) at (2.7cm, -0.13cm);
\coordinate [label=above:$1$] (C) at (1cm,1.0cm);
\coordinate [label=above:$ $] (D) at (1cm,0.1cm);
\coordinate [label=below:$0$] (H) at (1cm, -1cm);
\coordinate [label=below:$a$] (E) at (5cm, -1cm);
\coordinate [label=below:$ $] (F) at (5.7cm, -1cm);\draw   (C) -- node[sloped,above,] { } (G) ;
\coordinate [label=above:$T_1$] (T1) at (1.3cm,-0.15cm);
\coordinate [label=above:$T_2$] (T2) at (2.1cm,-0.7cm);
\draw[dotted] (C) -- node[sloped,above] { } (C|-A) ;
\draw [dotted] (B) --  node [sloped, above] ( ) { } (C|-A);
\draw [dotted]  (E) -- (B);
\draw    (G) -- (E);
\draw [dashed] (G) -- (H);
\draw  (E) -- (F);
\end{tikzpicture}
\end{figure}

We can then build an example in the following way {(where $a,b,c,d$ are now probability density functions):}

$a(x) = p(x)$ on $(- \infty, + \infty)$

$b(x) = p(x)$ on $(- \infty, 0]$ ; $b(x) = q(x)$ on $[0, + \infty)$

$c(x) = q(x)$ on $(- \infty, 0]$; {$c(x) = p(x)$ on $[0, + \infty)$ }

$d(x) = q(x)$ on $(- \infty, + \infty)$\\
where:

$p(x) = S_{x_0, a}(x) $ on $[0, + \infty)$, $p(x) =S_{x_0, a}(- x)$ on $(- \infty, 0]$.

$q(x)$ is similarly defined with parameters $x'_0$ and $a'$.\\
Note that $p(0) = q(0) = 1$, then $a, b, c, d$ are continuous.
The area under $S_{x_0, a}$ is equal to $area(T_1) + area(T_2) = x_0/2 + ay_0/2$.
To ensure the normalization of the distributions, this area should be equal to $1/2$, we have then to add the constraint that:
$(x_0 + a y_0) = 1$ and $(x'_0 + a' y'_0) = 1$. 
In that context, we can easily check that \quad\quad\quad\quad $a : b ::_{arigeo} c : d $ and  $KL(a, b) =   KL(c, d)$.  

Starting with the above example, and using parametric functions  $S_{h, x_0, a} $   and $S_{h, x'_0, a'} $ similar to the previous ones, but such that $S_{h, x_0, a}(0)=S_{h, x'_0, a'}(0) =h$, one can build  a more complex example:

Let $p_1 < p_2$. Let $\lambda <1/2$.

$a(x) = p(x)$ on $(- \infty, p_1]$; $a(x) = r(x)$ on $[p_1, p_2]$; $a(x) = p(x)$ on  $ [p_2, + \infty)$

$b(x) = p(x)$ on $(- \infty, p_1]$; $b(x) = r(x)$ on $[p_1, p_2]$; $b(x) = q(x)$ on $[p_{{2},} + \infty)$ 

$c(x) = q(x)$ on $(- \infty, p_1]$; {$c(x) = r(x)$ on $[p_1, p_2]$; $c(x) = p(x)$ on $[p_2, + \infty)$ }   

$d(x) = q(x)$ on $(- \infty, p_1]$; $d(x) = r(x)$ on $[p_1, p_2]$; $d(x) = q(x)$ on  $ [p_2, + \infty)$

where $p(x) = S_{h,x_0, a}(x - p_1) $ on $[p_1, + \infty), S_{h,x_0, a}(p_1 - x)$ on $(- \infty, p_1]$ and

$q(x) = S_{h,x'_0, a'}(x - p_2) $ on $[p_2, + \infty), S_{h,x'_0, a'}(p_2 - x)$ on $(- \infty, p_2]$,
 
 $r(x)$ is such that $r(p_1)=r(p_2)= h$ { (ensuring continuity of the 4 distributions)} and $ \int_{p_1}^{p_2} r(x)dx = 2\lambda  $ ($r$ may be taken, e.g., as uniform; in this case $h(p_2-p_1)=  2\lambda$),\\
with the constraints  
$(hx_0 + a y_0) = 1 - 2\lambda $, $(hx'_0 + a' y'_0) = 1  - 2\lambda$, 
(for ensuring normalization of the 4 distributions).
\bibliographystyle{elsarticle-num}
\bibliography{biblio}
\end{document}